\documentclass[11pt]{article}

\usepackage{dsfont}
\def\colorful{0}

\usepackage{amsthm,amsfonts,amsmath,amssymb,epsfig,color,float,graphicx,verbatim, enumitem,cleveref}
\usepackage{multirow}

\def\nnewcolor{1}
\ifnum\nnewcolor=1

\fi
\ifnum\nnewcolor=0

\fi

\ifnum\colorful=1
\newcommand{\new}[1]{{\color{red} #1}}

\else
\newcommand{\new}[1]{{#1}}

\fi

\newtheorem{theorem}{Theorem}[section]

\newtheorem{lemma}[theorem]{Lemma}
\newtheorem{informal theorem}[theorem]{Theorem (informal statement)}

\newtheorem{proposition}[theorem]{Proposition}

\newtheorem{claim}[theorem]{Claim}
\newtheorem{fact}[theorem]{Fact}

\newtheorem{remark}[theorem]{Remark}

\theoremstyle{definition}
\newtheorem{definition}[theorem]{Definition}
\newcommand{\eqdef}{\stackrel{{\mathrm {\footnotesize def}}}{=}}

\newcommand{\R}{\mathbb{R}}

\newcommand{\Z}{\mathbb{Z}}
\newcommand{\N}{\mathbb{N}}
\newcommand{\E}{\mathbf{E}}

\newcommand{\eps}{\epsilon}

\newcommand{\pr}{\mathbf{Pr}}
\renewcommand{\Pr}{\mathbf{Pr}}
\newcommand{\poly}{\mathrm{poly}}

\newcommand{\sgn}{\mathrm{sign}}
\newcommand{\sign}{\mathrm{sign}}

\newcommand{\opt}{\mathrm{OPT}}

\author{
Ilias Diakonikolas\thanks{Supported by NSF Medium Award CCF-2107079 and an H.I. Romnes Faculty Fellowship.}\\
University of Wisconsin-Madison\\
{\tt ilias@cs.wisc.edu}\\
\and
Daniel M. Kane\thanks{Supported by NSF Medium Award CCF-2107547.}\\
University of California, San Diego\\
{\tt dakane@cs.ucsd.edu}
}

\title{Statistical Query Lower Bounds for Smoothed Agnostic Learning}

\begin{document}

\maketitle

\begin{abstract}%
We study the complexity of smoothed agnostic learning, 
recently introduced by~\cite{CKKMS24}, 
in which the learner competes with the best classifier in a target class 
under slight Gaussian perturbations of the inputs. 
Specifically, we focus on the prototypical task of 
agnostically learning halfspaces under subgaussian distributions in the smoothed model. 
The best known upper bound for this problem relies on $L_1$-polynomial regression
and has complexity $d^{\tilde{O}(1/\sigma^2) \log(1/\epsilon)}$, 
where $\sigma$ is the smoothing parameter
and $\eps$ is the excess error.
Our main result is a Statistical Query (SQ) lower bound providing formal evidence
that this upper bound is close to best possible. In more detail, we show that 
(even for Gaussian marginals) any SQ algorithm for  
smoothed agnostic learning of halfspaces requires complexity 
$d^{\Omega(1/\sigma^{2}+\log(1/\eps))}$. This is the first non-trivial lower bound 
on the complexity of this task and nearly matches the known upper bound.
Roughly speaking, 
we show that applying $L_1$-polynomial regression 
to a smoothed version of the function is essentially best possible.
Our techniques involve finding a moment-matching hard distribution 
by way of linear programming duality. This dual program corresponds exactly 
to finding a low-degree approximating polynomial to the smoothed version of the 
target function (which turns out to be the same condition required for the $L_1$-polynomial regression to work). 
Our explicit SQ lower bound then comes from proving lower bounds 
on this approximation degree for the class of halfspaces.
\end{abstract}

\setcounter{page}{0}

\thispagestyle{empty}

\newpage

\section{Introduction} \label{sec:intro}

In the agnostic model~\cite{Haussler:92, KSS:94}, the learner is given samples 
from a distribution $D$ on labeled examples in $\R^d \times \{\pm 1\}$, 
and the goal is to compute a hypothesis 
that is competitive with the {\em best-fit} function in a known concept class $\mathcal{C}$. 
In more detail, given $n$ i.i.d. samples $(x^{(i)}, y_i)$ drawn from $D$ and a desired excess error $\eps>0$,
an agnostic learner for $\mathcal{C}$ outputs a hypothesis $h: \R^d \to \{\pm 1\}$ such that 
$\Pr_{(x, y)\sim D}[h(x) \neq y] \leq \opt+\eps$, 
where $\opt \eqdef \inf_{f \in \mathcal{C}}  \Pr_{(x, y)\sim D}[f(x) \neq y]$.  
Importantly, in the agnostic model, no assumptions are made on the observed labels. 
As such, agnostic learning can be viewed as a general framework to study 
supervised learning in the presence of arbitrary misspecification or (potentially adversarial) label noise. 
It is worth recalling that the agnostic model generalizes Valiant's (realizable) PAC model~\cite{Valiant:84}, 
where the labels are promised to be consistent 
with an unknown function in the class $\mathcal{C}$ (i.e., $\opt=0$).

In the original definition of agnostic learning, no assumptions are made on 
the marginal distribution of $D$ on $x$ (henceforth denoted by $D_x$). 
Learning in this {\em distribution-free} agnostic model turns out to be computationally 
challenging. While the model is statistically tractable for all concept classes 
with bounded VC dimension (see, e.g.,~\cite{BEH+:89}), prior work established  
super-polynomial computational lower bounds, 
even for weak agnostic learning of simple concept classes such as halfspaces~\cite{Daniely16,DKMR22, Tiegel23}. 

One approach to circumvent these computational limits involves 
making natural assumptions about the underlying distribution $D_x$ on feature vectors. 
Examples include structured continuous distributions---such as the Gaussian distribution 
or more broadly any log-concave distribution---and discrete distributions---such as 
the uniform distribution on the Boolean hypercube.
While certain computational limitations persist even in this distribution-specific setting, 
see, e.g.,~\cite{DKZ20, GGK20, DKPZ21,DKR23}, 
it turns out that non-trivial positive results are possible. 
Specifically, a line of work has developed efficient learning algorithms with both exact 
(see, e.g.,~\cite{KKMS:08,KOS:08,DKKTZ21})
and approximate (see, e.g.,~\cite{ABL17,DKS18a,DKTZ20c,DKTZ22}) error guarantees 
in the distribution-specific setting. 

A complementary approach---recently introduced in~\cite{CKKMS24} and
the focus of the current paper---involves relaxing the notion of optimality, 
by working in an appropriate {\em smoothed} setting. This approach was 
inspired by a classical line of work in the smoothed analysis of algorithms~\cite{SpielmanT04}. 
For concreteness, we define the smoothed agnostic learning framework below. 

\begin{definition}[Smoothed Agnostic Learning~\cite{CKKMS24}] \label{def:smoothed-agnostic}
Let $\mathcal{C}$ be a Boolean concept class on $\R^d$ and $\mathcal{D}_{x}$ 
be a family of distributions on $\R^d$. We say that an algorithm $\mathcal{A}$ agnostically 
learns $\mathcal{C}$ in the {\em smoothed setting} if it satisfies the following property. 
Given an error parameter $\eps>0$, a smoothing parameter $\sigma>0$ and access to i.i.d.\
samples from a distribution $D$ on $\R^d \times \{\pm 1\}$ such that $D_x \in \mathcal{D}_{x}$, 
$\mathcal{A}$ outputs a hypothesis $h:\R^d \to \{\pm 1\}$ such that with high probability 
it holds $\Pr_{(x, y) \sim D} [h(x) \neq y] \leq \opt_{\sigma}+\eps$, where 
\begin{equation} \label{eqn:opt-sigma}
\opt_{\sigma} \eqdef \inf_{f \in \mathcal{C}} \E_{z \sim \mathcal{N}_d}\left[ \Pr_{(x, y) \sim D} [f(x+\sigma z) \neq y] \right] \;.
\end{equation}
\end{definition}

Some comments are in order. First note that for $\sigma=0$ 
one recovers the standard agnostic model. 
The hope is that when $\sigma>0$ is sufficiently large, 
smoothed agnostic learning becomes computationally tractable 
in settings where vanilla agnostic learning is not. Specifically,
the goal is to characterize (in terms of both upper and lower bounds) 
the computational complexity of smoothed agnostic learning 
for natural concept classes $\mathcal{C}$ and distributions $\mathcal{D}_x$,
as a function of $d, 1/\eps$, and $1/\sigma$.

It turns out that it is possible to develop 
efficient smoothed agnostic learners for natural concept classes $\mathcal{C}$ 
with respect to a broad class of distribution families 
$\mathcal{D}_{x}$---for which vanilla agnostic learning is computationally hard. 
The first positive results were given in~\cite{CKKMS24} 
(who defined the smoothed framework). 
The latter work focused on the natural 
concept class of Multi-Index Models (MIMs)\footnote{A function $f: \R^d \to \{\pm 1\}$ is called a $K$-MIM if there exists a $K$-dimensional subspace $V$ such that for each $x \in \R^d$ 
$f(x)$ only depends on the projection of $x$ onto $V$.} 
that satisfy an additional mild condition (namely, having bounded Gaussian surface area). 
The distribution class $\mathcal{D}_x$ consists of all distributions whose univariate 
projections have subgaussian (or, more generally, strictly sub-exponential) tails. 
In the following discussion,
we focus on the case of subgaussian distributions for concreteness.

In this context,~\cite{CKKMS24} shows that the $L_1$-polynomial regression algorithm~\cite{KKMS:08} 
is a smoothed agnostic learner for $K$-MIMs with Gaussian surface area at most $\Gamma>0$ 
with (sample and computational) complexity $d^{\poly(K, \Gamma, 1/\sigma, 1/\eps)}$. 
This upper bound was recently improved in the same context by~\cite{KW25}, 
who showed that the same algorithm has complexity 
$d^{\poly(K, 1/\sigma, \log(1/\eps))}$---independent of the surface area bound. 
More precisely, the exponent of $d$ in the upper bound of~\cite{KW25} is 
$\tilde{O}(K/\sigma^2) \log(1/\epsilon)$.

Motivated by the aforementioned positive results, here 
we ask the question whether the known upper bounds for smoothed agnostic learning 
can be qualitatively improved. Interestingly, all known algorithmic works 
on the topic (including~\cite{CKKMS24, KW25} and the work of~\cite{KMeka25} studying 
a Boolean-version of smoothing) essentially rely 
on the $L_1$ polynomial regression method---the standard agnostic learner 
in the ``non-smoothed'' setting. The technical novelty of these works lies in the analysis: 
they establish appropriate polynomial approximation results that yield their upper bounds. 

Given this state-of-affairs, it is natural to ask whether the use of $L_1$ polynomial
regression is inherent or whether algorithms with qualitatively better complexity 
are possible in the smoothed setting. More concretely, focusing on the subgaussian case 
for single-index models ($K=1$) and halfspaces in particular, 
the best known upper bound is $d^{\tilde{O}(1/\sigma^2) \log(1/\epsilon)}$. 
Is there a $\poly(d, 1/\eps, 1/\sigma)$ time algorithm? 
Or is the exponential dependence on $1/\sigma^2$ and $\log(1/\eps)$ inherent? 
Motivated by this gap in our understanding, we study the following question:
\begin{center}
{\em What is the complexity of smoothed agnostic learning for halfspaces 
under subgaussian distributions?}
\end{center}
As our main result, we give formal evidence---in the form of a Statistical Query 
lower bound---that the complexity of known algorithms for this problem 
is essentially optimal. 

\medskip

Before we formally state our results, we need basic background 
on the Statistical Query (SQ) Model and the class of Halfspaces.

\paragraph{Statistical Query Model}
Statistical Query (SQ) algorithms are a class of algorithms that are allowed 
to query expectations of bounded functions of the underlying distribution
rather than directly access samples. 
Formally, an SQ algorithm has access to the following oracle.

\begin{definition}[\textsc{STAT} Oracle] \label{def:stat-oracle}
Let $D$ be a distribution over $\R^d \times  \{\pm 1\}$. 
A statistical query is a function $q: \R^d \times \{\pm 1\} \to [-1, 1]$.  We define
\textsc{STAT}$(\tau)$ to be the oracle that given a query $q(\cdot, \cdot)$
outputs a value $v$ such that
$|v - \E_{ (x,y) \sim D}\left[q(x, y)\right]| \leq \tau$,
where $\tau>0$ is the tolerance of the query.
\end{definition}

The SQ model was introduced in~\cite{Kearns:98} 
as a natural restriction of the PAC model 
and has been extensively studied in learning 
theory; see, e.g.,~\cite{FGR+13, FeldmanGV17, Feldman17} and references therein. 
The class of SQ algorithms is fairly broad: a wide range of known algorithmic techniques 
in machine learning are known to be implementable using SQs
(see, e.g.,~\cite{Chu:2006, FGR+13, FeldmanGV17}). 

\medskip

\paragraph{Halfspaces}
A {\em halfspace} is any Boolean-valued function 
$f: \R^d \to \{\pm 1\}$ of the form $f(x) = \sign(w \cdot x - t)$, 
for a weight vector $w \in \R^d$ and a threshold 
$t \in \R$. The algorithmic task of learning halfspaces from labeled
examples is one of the most basic and extensively studied 
problems in machine learning~\cite{Rosenblatt:58, Novikoff:62,
MinskyPapert:68,  FreundSchapire:97, Vapnik:98, CristianiniShaweTaylor:00}. 
Halfspaces are a prototypical family of Single-Index models. 
It is known that even {\em weak} agnostic learning of halfspaces
is computationally hard in the distribution-free setting~\cite{Daniely16}. 
Moreover, in the distribution-specific setting where the distribution is Gaussian,
achieving error $\opt+\eps$ requires complexity $d^{\Omega(1/\eps^2)}$~\cite{DKPZ21,DKR23}.

\subsection{Our Results and Techniques} \label{ssec:results}

We are now ready to state our main result:

\begin{theorem}[Main Result]\label{thm:main-inf}
Any smoothed agnostic learner for the class $\mathcal{H}$ 
of halfspaces on $\R^d$ under the Gaussian distribution, with excess error $\eps$ and smoothing parameter $\sigma>0$, requires SQ complexity 
$d^{\Omega(\log(1/\eps)+ 1/(\sigma+\eps)^{-2})}$. In more detail, 
any SQ algorithm for this task  must either make $2^{d^{\Omega(1)}}$ queries 
or a query of accuracy better than $d^{-\Omega(\log(1/\eps)+ 1/(\sigma+\eps)^{-2})}$.
\end{theorem}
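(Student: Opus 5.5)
The plan is the standard SQ lower-bound template for Gaussian-marginal problems. The hard instances have the form $D_v$: $x\sim\mathcal N_d$, and the label depends only on the projection $v\cdot x$ through a univariate conditional mean $g(v\cdot x)=\E[y\mid v\cdot x]$. If the joint one-dimensional distributions of $(v\cdot x,y)$ match the moments of $\mathcal N(0,1)\times(\text{independent label})$ up to degree $k$, then the standard machinery (e.g., the DKS-style correlation lemma together with a packing of $2^{d^{\Omega(1)}}$ nearly orthogonal unit vectors) shows the following. Any SQ algorithm distinguishing $\{D_v\}$ from the null distribution with labels independent of $x$ needs either $2^{d^{\Omega(1)}}$ queries or tolerance $d^{-\Omega(k)}$. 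It therefore suffices to find, for $k=\Omega(\log(1/\eps)+1/(\sigma+\eps)^{2})$, a function $g:\R\to[-1,1]$ with three properties: (i) $\E_{t\sim\mathcal N}[g(t)p(t)]=0$ for all polynomials $p$ of degree $<k$ (orthogonality, which gives indistinguishability from the null, where $\E[y\mid x]\equiv 0$ forces $\opt_\sigma\geq 1/2$-ish for learners); (ii) some halfspace $\sign(v\cdot x - \theta)$ has $\opt_\sigma$ smaller than $1/2-\Omega(\eps)$, or more precisely smaller than the best achievable error on the null by more than $2\eps$; (iii) the reduction from learning to testing goes through. For (iii), a hypothesis with error $\opt_\sigma+\eps$ can be checked with one extra query, which separates the two cases.

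Condition (ii) is the smoothed correlation $\E_{t}[g(t)\,T_\sigma f_\theta(t)]$, where $T_\sigma f_\theta(t)=\E_z[\sign(t+\sigma z-\theta)]=\mathrm{erf}$-type, a smoothed step $S(t)=2\Phi((t-\theta)/\sigma)-1$. We need this correlation to be at least about $4\eps$ while $g$ is bounded and orthogonal to degree-$<k$ polynomials. By LP duality (Hahn--Banach in $L_1$ vs $L_\infty$), the maximum of $\E[gS]$ over $\|g\|_\infty\le1$, $g\perp\mathcal P_{<k}$ equals $\min_{\deg p<k}\E_{t\sim\mathcal N}|S(t)-p(t)|$. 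So the whole theorem reduces to a lower bound on the Gaussian $L_1$ approximation degree: for suitable $\theta$, every polynomial of degree $<k$ has $\E|S-p|\ge 4\eps$ whenever $k\le c(\log(1/\eps)+1/(\sigma+\eps)^2)$.

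The main obstacle is this approximation lower bound, with two regimes. For the $\log(1/\eps)$ term, take $\sigma$ small relative to the threshold scale and choose the threshold $\theta\approx\sqrt{\log(1/\eps)}$ (or use the tail). A degree-$k$ polynomial cannot $L_1$-approximate a sharp-ish step located where the Gaussian mass is only $\eps$-ish. Alternatively, the known $d^{\Omega(\log 1/\eps)}$-type argument applies: $\E|S-p|\ge \E[(S-p)h]$ for a bounded test function $h\perp\mathcal P_{<k}$ with $\E[hS]$ large, built by explicit construction, e.g.\ $h=\sign(H_k)$-like or a modulated function. For the $1/\sigma^2$ term with constant $\eps$, set $\theta=0$ and use $S(t)=\mathrm{erf}(t/\sqrt2\sigma)$. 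I would first try a direct dual witness: take $h(t)=\sign(\sin(\omega t))$-style oscillation with frequency $\omega\approx 1/\sigma$, which is nearly orthogonal to low-degree polynomials for $k\ll\omega^2$, since Fourier modes $e^{i\omega t}$ have Gaussian Hermite coefficients decaying like $\omega^j e^{-\omega^2/2}/\sqrt{j!}$. Then correct it exactly onto $\mathcal P_{<k}^\perp$ at small $L_\infty$ cost, and verify that $\E[hS]\gtrsim$ constant because $S$ still has a jump of width $\sigma$ resolved at frequency $1/\sigma$. Finally, combine the two regimes, replacing $\sigma$ by $\sigma+\eps$ since for $\sigma<\eps$ the smoothing is dominated. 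Plugging $k$ into the SQ template yields the stated $d^{\Omega(\log(1/\eps)+1/(\sigma+\eps)^{2})}$ bound.
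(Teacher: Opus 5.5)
Your reduction matches the paper's (\Cref{prop:moment-matching}, \Cref{thm:LP-general}). The substance is in the two degree lower bounds, and the $\log(1/\eps)$ bound has a genuine gap: it is missing.

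\textbf{The $\log(1/\eps)$ term.} A ``$\sign(H_k)$-like'' test function is not orthogonal to $\mathcal{P}_{<k}$. Positing a bounded $h\perp\mathcal{P}_{<k}$ with large $\E[hS]$ just restates the dual problem. The tail heuristic fails as phrased: if the step sits where the Gaussian mass is only $\eps$-ish, the constant $-1$ already has $L_1$ error $O(\eps)$, and nothing in your sketch forces degree $\log(1/\eps)$.

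The paper's idea (\Cref{lem:logeps}) is that $T_\sigma\sign=U_a\sign$ with $a=(1+\sigma^2)^{-1/2}$. For odd $k$ and $\deg p<k$, self-adjointness and $U_ah_k=a^kh_k$ give $\E[h_k(U_a\sign-p)]=a^k\E[h_k\sign]=2^{-O(k)}$. Since $h_k$ is unbounded, this is only an $L_2$ certificate, and it is upgraded to $L_1$ as follows. The assumption $\|U_a\sign-p\|_1<1$ forces $\|p\|_2\le 2^{O(k)}$ by \Cref{fact:2norm}. Hypercontractivity then gives $\|U_a\sign-p\|_4\le 2^{O(k)}$, and Paley--Zygmund yields $\|U_a\sign-p\|_1\ge 2^{-O(k)}$.

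Your tail idea can be rescued, but only with an ingredient you did not state. Choose $\theta$ so that $\E[T_\sigma f+1]=4\eps$, and note that $q=p+1$ has $\|q\|_1\le 5\eps$. Then use $\int_{t>R}|q|\phi\le\|q\|_2\Pr[G>R]^{1/2}\le 2^{k/2}e^{-R^2/4}\|q\|_1$. This shows $q$ carries negligible mass beyond $R\approx\theta/(1+\sigma^2)\gg\sqrt{k}$, which is where half the mass of $T_\sigma f+1$ lives.

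\textbf{The $(\sigma+\eps)^{-2}$ term.} Your square-wave witness is a genuinely different route and can be made rigorous. It is the dual picture of \Cref{prop:first-mm-ltfs}: that distribution has density ratio $\approx 2\cdot\mathbf{1}[\mathrm{frac}(x\sqrt{Ck})\in[1/2,1]]$, i.e.\ $1$ plus a square wave of period $\Theta(k^{-1/2})$. The paper gets exact moment matching for free from $k$-wise independent Gaussians, then compares thresholds $0$ and half a period. You instead need an approximate-then-correct step.

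However, your target ``$\E[hS]\gtrsim$ constant'' is impossible. For any admissible $h$, weak duality gives $\E[hS]\le\min_{\deg p<k}\|S-p\|_1=O(\sigma+k^{-1/2})$. Directly, $\E[hS]\approx 2\phi(0)\bigl(\tfrac{\pi}{2\omega}-\sqrt{2/\pi}\,\sigma\bigr)$ for $\omega\lesssim 1/\sigma$, which is about $0.6\sigma$ at $\omega=1/\sigma$. So take $\omega=c/(\sigma+\eps)$ with $c$ small, which gives $\E[hS]=\Theta((\sigma+\eps)/c)>2\eps$, and take $k\le c'\omega^2$.

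The correction also cannot subtract the polynomial projection, since that is unbounded. Instead use bounded functions biorthogonal to $h_0,\dots,h_{k-1}$, supported on $|t|\le O(\sqrt{k})$. Their sup norms are $e^{O(k)}$, and the $e^{-\Omega(\omega^2)}$ residual Hermite coefficients beat this precisely because $k\le c'\omega^2$.
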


Some comments are in order. First note that for $\sigma \geq \eps$, our SQ lower bound
nearly matches the upper bound of~\cite{KW25}
which holds for all subgaussian distributions. 
That is, the special case of the Gaussian distribution seems to capture the hardness of the 
subgaussian family. In the special case where $\sigma=0$, 
one recovers the known (optimal) SQ lower bound of $d^{\Omega(1/\eps^2)}$~\cite{DKPZ21} 
for vanilla agnostic learning under Gaussian marginals. On the other hand, 
when $0< \sigma = \Omega(1)$, we obtain an SQ lower bound of $d^{\Omega(\log(1/\eps))}$. 

The fact that our SQ lower bound for smoothed agnostic learning of halfspaces 
qualitatively matches the upper bound obtained via $L_1$ polynomial regression 
is not a coincidence. Roughly speaking, we show the following more general result 
(\Cref{thm:LP-general}): for any concept classe $\mathcal{C}$
with natural closure properties 
(including halfspaces and more general MIMs), 
the degree of $L_1$ polynomial approximation to the smoothed function 
$T_{\sigma}f : =  \E_{z \sim \mathcal{N}} \left[ f(x+\sigma z)  \right]$ 
(directly related to the definition of the smoothed optimum \eqref{eqn:opt-sigma}) 
over $f \in \mathcal{C}$ determines the SQ complexity of smoothed agnostic learning. 
Roughly speaking, if the optimal polynomial approximation degree for $\mathcal{C}$ 
is $m^{\ast}$, 
then the SQ complexity of smoothed agnostic learning $\mathcal{C}$
is $d^{\Omega(m^{\ast})}$---matching the complexity of $L_1$-polynomial regression.

It is natural to ask whether one can show stronger SQ lower bounds for 
agnostically learning more general multi-index models in the smoothed setting. Interestingly, we show that this is not possible when the marginal distribution is promised 
to be Gaussian (as in \Cref{thm:main-inf}). Specifically, we show in \Cref{prop:alg-g} 
(given in \Cref{sec:alg-Gaussian}) that $L_1$-polynomial regression is a smoothed agnostic learner for any distribution $(X, y)$ with $X$ Gaussian, with complexity $d^{O(\log(1/\eps)/\sigma^2)}$.

\paragraph{Technical Overview}
The general framework that we leverage to obtain our results follows
a line of work on SQ lower bounds for Non-Gaussian Component Analysis (NGCA) 
and its various generalizations, developed in a series of papers  
over the past decade; see, e.g.,~\cite{DKS17-sq, DKPZ21, DKRS23,DIKR25}.

Specifically, our proof proceeds in two steps: First, we show a generic SQ lower bound
(\Cref{thm:LP-general}) that is applicable to any concept class $\mathcal{C}$ satisfying
natural closure properties. 
In the context of smoothed agnostic learning of halfspaces under the Gaussian 
distribution, this result shows the following: If $f(x) = \sgn(x)$ (the 
univariate sign function), then any SQ algorithm for the problem has 
complexity $d^{\Omega(m)}$, where $d$ is the dimension and $m$ is the 
minimum degree of $L_1$-polynomial approximation with error $O(\eps)$ 
of the smoothed function 
$T_{\sigma}f(x) = \E_{z \sim \mathcal{N}} \left[ f(x+\sigma z)  \right]$. 
The proof of  our generic lower bound proof follows the overall strategy 
of~\cite{DKPZ21} (who prove our bound in the special case of $\sigma=0$). 
The basic idea is to construct a moment-matching distribution on $y$ 
so that $\opt_{\sigma} + \eps$ 
is still somewhat less than $1/2$. 
Thus, any algorithm that produces a hypothesis with error $\opt_{\sigma}+\eps$ would allow 
one to distinguish between this $y$ and a $y$ that is independent of $x$. 
We then show that this distinguishing (testing) problem is SQ-hard, as it corresponds to a hard instance of conditional NGCA~\cite{DIKR25}. 
To construct the desired moment-matching distribution on $y$, 
we use LP duality and show that the relevant dual program 
corresponds exactly to finding a low-degree $L_1$ polynomial 
approximation of $T_{\sigma} f$.

The second step of our proof involves proving explicit degree lower bounds for $T_{\sigma} f$. Here we establish a degree lower bound of 
$c (\log(1/\eps)+ 1/(\sigma+\eps)^2)$ for some universal constant $c>0$\footnote{Note that the second term is roughly $1/\sigma^2$, as long as $\sigma \geq \eps$. For $\sigma=0$, degree $O(1/\eps^2)$ is known to suffice.}. 

To establish a degree lower bound of $\log(1/\eps)$ (\Cref{lem:logeps}) 
we proceed as follows. We start by noting the identity $T_{\sigma}f = U_{a}f$, 
for the function $f(x) = \sgn(x)$, where $U_a$ is the the Ornstein-Ulenbeck (OU) operator and
$a = 1/\sqrt{1+\sigma^2}$. An explicit analysis, using properties of Hermite polynomials 
and the OU operator, allows us to show that the $L_2$ error of any degree-$k$ polynomial 
approximation to $T_{\sigma} f$ must be at least $2^{-O(k)}$. In fact, we can even certify this error by noting that the inner product of $T_{\sigma} f - p$ with an appropriate Hermite polynomial must be large. 
The next step is to compare the $L_1$ and $L_2$ polynomial approximations, which 
we carry out by using results about the concentration of $p$ and this Hermite polynomial. 
As a result, we show that the $L_1$ error of any degree-$k$ polynomial 
approximation to $T_{\sigma} f$ must be $2^{-O(k)}$ as well 
(for a different universal constant hidden in the $O()$.)

To establish the $L_1$ degree lower bound of $(\sigma+\eps)^{-2}$ (\Cref{lem:sigma-sq}), 
we prove the following new structural 
result (\Cref{prop:first-mm-ltfs}) that may be of broader interest: 
there exists a $k$-wise independent family of univariate Gaussians 
(that we explicitly construct) so that their sum mod $1$ is close 
to any specified distribution. The sum of these Gaussians divided by $\sqrt{k}$ will be a 
distribution that matches $k$ moments with the standard Gaussian but has a very different 
distribution when taken modulo $1/\sqrt{k}$.
By letting $X \mid y=1$ be such a distribution 
missing a large fraction of the possible values modulo $1/\sqrt{k}$, 
it is easy to see that for some threshold $f$ 
that the $|T_{\sigma} f(X) - T_{\sigma} f(G)| \gg \eps$ 
so long as $\eps, \sigma \ll 1/\sqrt{k}$. 
This gives us an example of a moment-matching distribution 
$(X,y)$ with $\opt_{\sigma}$ substantially less than $1/2$, 
which can be plugged into our generic SQ lower bound.

Regarding our upper bound for the smoothed Gaussian case (\Cref{prop:alg-g}), 
it suffices to prove an upper bound on the degree of $L_2$ polynomial 
approximation of $T_{\sigma} f$. 
To do this, we note that $T_{\sigma} f = U g$ 
for $g(x) = f(\sqrt{1+\sigma^2}x)$ and $U$ is the relevant element of the Ornstein-Ulenbeck semigroup. 
But if $g$ has Hermite transform $g= \sum g^{[k]}$, 
then $U g$ is $\sum g^{[k]}(1+\sigma^2)^{-k/2}$. 
Since $\| g^{[k]} \|_2 \leq \|g\|_2 \leq 1$, taking the first 
$O(\log(1/\eps)/\sigma^2)$ terms suffices to give error $\eps.$

\section{Preliminaries} \label{sec:prelims}

\paragraph{Notation}
For $n \in \Z_+$, we denote $[n] \eqdef \{1, \ldots, n\}$.
We typically use small letters to denote random variables when the underlying distribution is clear from the context.
We use $\E[x]$ for the expectation of the random variable $x$
and $\pr[\mathcal{E}]$ for the probability of event $\mathcal{E}$. 
Let $\mathcal{N}$ denote the standard univariate Gaussian distribution
and $\mathcal{N}_d$ denote the standard $d$-dimensional Gaussian distribution.
We use $\phi_d$ to denote the pdf of $\mathcal{N}_d$.

Small letters are used for vectors and capital letters are used for matrices.
Let $\|x\|_2$ denote the $\ell_2$-norm of the vector $x \in \R^d$.
We use $u \cdot v $ for the inner product
of vectors $u, v \in \R^d$.
For a matrix $P \in \R^{m \times n}$, let $\|P\|_2$ denote
its spectral norm and $\|P\|_F$ denote its Frobenius norm.
We use $I_d$ to denote the $d \times d$ identity matrix.

\paragraph{Hermite Analysis} \label{ssec:hermite}
Consider $L_2(\R^d, \mathcal{N}_d)$, the vector space of all
functions $f : \R^d \to \R$ such that $\E_{x \sim \mathcal{N}_d}[f(x)^2] <\infty$.
This is an inner product space under the inner product
$\langle f, g \rangle = \E_{x \sim \mathcal{N}_d} [f(x)g(x)] $.
This inner product space has a complete orthogonal basis given by
the \emph{Hermite polynomials}.
In the univariate case, we will work with normalized Hermite polynomials
defined below.

\begin{definition}[Normalized Probabilist's Hermite Polynomial]\label{def:Hermite-poly}
For $k\in\N$, the $k$-th \emph{probabilist's} Hermite polynomial
$He_k:\R\to \R$ is defined as
$He_k(t)=(-1)^k e^{t^2/2}\cdot\frac{\mathrm{d}^k}{\mathrm{d}t^k}e^{-t^2/2}$.
We define the $k$-th \emph{normalized} probabilist's Hermite polynomial
$h_k:\R\to \R$ as
$h_k(t)=He_k(t)/\sqrt{k!}$.
\end{definition}

\noindent Note that for $G\sim \mathcal{N}$ we have $\E[h_n(G)h_m(G)] = \delta_{n,m}$,
and $ \sqrt{m+1} h_{m+1}(t) = t h_m(t) - h'_m(t)$.

\new{We will use various well-known properties of these
polynomials in our proofs.}

\paragraph{Smoothing Operators}
We will also need the following smoothing operators:
\begin{itemize}[leftmargin=*]
\item For a function $f: \R^k \to \R$ and $\sigma>0$, we define the operator
$T_{\sigma}f(x) = \E_{z \sim \mathcal{N}_k} \left[ f(x+\sigma z)  \right]$. This operator is motivated
by the definition of our smoothing model.
\item  For a function $f: \R^k \to \R$ and $\rho \in [0, 1]$, the Ornstein-Uhlenbeck (OU) operator is defined 
as follows $U_{\rho}f(x) = \E_{z \sim \mathcal{N}_k} \left[ f( \rho x+ \sqrt{1-\rho^2} z)  \right]$. This operator
arises in some of our analysis.
\end{itemize}
\new{We note that for homogeneous functions these two operators are essentially equivalent
up to scaling. Specifically, we have that $T_{\sigma} f = U_a g$, 
where $a = 1/\sqrt{1+\sigma^2}$ and $g(x) = f(\sqrt{1+\sigma^2}x)$.} 

We will use the following standard properties of the OU operator:

\begin{fact}[See, e.g., \cite{Bog:98} and \cite{ODonnellbook} (Chapter 11)]
We have the following:
\begin{enumerate}
\item (Self-adjointness) For $f, g\in L_2(\mathcal{N})$ and $\rho\in(0,1)$,
$\E_{x \sim \mathcal{N}}[ (U_{\rho} f(x)) g(x)] = \E_{x \sim \mathcal{N}}[ (U_{\rho} g(x)) f(x)].$

\item (Hermite eigenfunctions) For $i\in\Z_+$, $U_{\rho} h_i(z)=\rho^i h_i(z)$.
\end{enumerate}
\end{fact}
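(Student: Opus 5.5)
The plan is to prove both items directly from the coupling that defines $U_\rho$, with no Hermite machinery for the first item and only the Hermite generating function for the second.

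\textbf{Self-adjointness.} Let $x,z\sim\mathcal{N}$ be independent and set $x' = \rho x + \sqrt{1-\rho^2}\, z$.
\begin{itemize}
\item The pair $(x,x')$ is jointly Gaussian, with both coordinates of mean $0$ and variance $\rho^2 + (1-\rho^2) = 1$, and with covariance $\rho$.
\item A centered bivariate Gaussian law is determined by its covariance matrix. This one is symmetric under swapping coordinates, so $(x,x')$ and $(x',x)$ have the same distribution.
\item By the definition of $U_\rho$ and Fubini (conditioning on $x$),
\[
\E_{x}[(U_\rho f)(x)\, g(x)] = \E[f(x')\,g(x)] = \E[f(x)\,g(x')] = \E_x[f(x)\,(U_\rho g)(x)].
\]
\item To justify the integrals for $f,g\in L_2(\mathcal{N})$, use Cauchy--Schwarz: $\E|f(x')g(x)| \le \|f\|_2\|g\|_2 < \infty$, since $x'$ is itself standard Gaussian.
\end{itemize}

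\textbf{Hermite eigenfunctions.} I will use the generating function $e^{tx - t^2/2} = \sum_{k\ge 0} He_k(x)\, t^k/k!$, which follows from the Rodrigues formula in Definition~\ref{def:Hermite-poly} via Taylor expansion of $e^{-(x-t)^2/2}$ in $t$.
\begin{itemize}
\item Apply $U_\rho$ to the left side using the Gaussian moment generating function:
\[
\E_z\!\left[e^{t(\rho x+\sqrt{1-\rho^2}z) - t^2/2}\right] = e^{t\rho x + t^2(1-\rho^2)/2 - t^2/2} = e^{(\rho t)x - (\rho t)^2/2}.
\]
\item Expanding the right-hand side with the generating function gives $\sum_k He_k(x)\,\rho^k t^k/k!$.
\item Interchange $U_\rho$ with the series: $\sum_k |He_k(y)|\,|t|^k/k!$ is dominated by an expression of the form $e^{C|t||y| + C t^2}$, which is integrable against the Gaussian in $z$. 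So $U_\rho$ may be applied term by term.
\item Comparing coefficients of $t^k$ gives $U_\rho He_k = \rho^k He_k$.
\item Dividing by $\sqrt{k!}$ yields $U_\rho h_k = \rho^k h_k$, since $U_\rho$ is linear.
\end{itemize}

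The only mildly delicate step is this interchange of expectation and infinite sum. A cleaner way around it is to note that $U_\rho He_k$ is a polynomial in $x$ of degree at most $k$, so $\E_z\!\left[e^{t(\rho x+\sqrt{1-\rho^2}z)-t^2/2}\right]$ is an entire function of $t$. One can then differentiate $k$ times at $t=0$ under the integral, which is legitimate by local uniform domination. This gives $\partial_t^k$ of the left side equal to $U_\rho He_k(x)$, and $\partial_t^k$ of the right side equal to $\rho^k He_k(x)$, which is the same identity.
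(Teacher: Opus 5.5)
The paper gives no proof of this fact; it only cites Bogachev and O'Donnell (Chapter 11). Your argument is correct and is essentially the standard proof found there: self-adjointness follows from exchangeability of the $\rho$-correlated Gaussian pair $(x,x')$, and the eigenfunction property follows from $U_\rho\, e^{tx-t^2/2}=e^{(\rho t)x-(\rho t)^2/2}$ together with comparing coefficients of $t^k$. Your domination bound holds with explicit constants, namely $\sum_k |He_k(y)|\,|t|^k/k! \le e^{|t||y|+t^2/2}$. One aside is mis-justified: $U_\rho He_k$ being a polynomial is not why the $z$-expectation is entire in $t$. That entireness is immediate from your explicit formula, however, so nothing breaks.
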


\paragraph{Analytic Facts}
For a polynomial $p: \R^k \to \R$, we consider the random variable $p(x)$,
where $x \sim \mathcal{N}_k$. We will use $\|p\|_r$, for $r \geq 1$, 
to denote the $L_r$-norm
of the random variable $p(x)$, i.e., $\|p\|_r \eqdef \E_{x \sim  \mathcal{N}_k} [|p(x)|^r]^{1/r}$.

We will need the following analytic and probabilistic facts.
We first recall the following moment bound for low-degree
polynomials, which is equivalent to the hypercontractive
inequality \cite{Bon70,Gross:75}:
\begin{fact}[Hypercontractive Inequality] \label{fact:hc}
Let $p: \R^n \to \R$ be a degree-$d$ polynomial and $q>2$.  Then
$\|p\|_q \leq (q-1)^{d/2} \|p\|_2$.
\end{fact}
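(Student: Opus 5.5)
The plan is to derive the stated moment bound from Nelson's Gaussian hypercontractivity theorem for the Ornstein--Uhlenbeck operator: for $q>2$ and $\rho \le 1/\sqrt{q-1}$, every $f\in L_2(\mathcal{N}_n)$ satisfies $\|U_\rho f\|_q \le \|f\|_2$. Given this, the degree bound follows by inverting $U_\rho$ on the span of Hermite polynomials of degree at most $d$.

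\textbf{Step 1 (reduction).} Expand $p=\sum_{k=0}^{d} p^{[k]}$, where $p^{[k]}$ is the projection of $p$ onto the span of the degree-$k$ multivariate Hermite polynomials $\prod_i h_{\alpha_i}(x_i)$ with $|\alpha|=k$. Since $p$ has degree $d$, no components of degree above $d$ appear. Set $\rho=1/\sqrt{q-1}\in(0,1)$ and define
\[
g=\sum_{k=0}^{d}\rho^{-k}p^{[k]} .
\]
By the eigenfunction property in the Fact above (tensorized coordinatewise), $U_\rho g = p$. Applying Nelson's inequality and Parseval gives
\[
\|p\|_q=\|U_\rho g\|_q\le \|g\|_2=\Big(\sum_{k\le d}\rho^{-2k}\|p^{[k]}\|_2^2\Big)^{1/2}\le \rho^{-d}\|p\|_2=(q-1)^{d/2}\|p\|_2 .
\]

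\textbf{Step 2 (Nelson's theorem), the main obstacle.} I would prove Nelson's theorem by going through the Boolean cube.

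First, establish the two-point inequality: for $f(x)=a+bx$ on $\{\pm1\}$ with uniform measure, $\|a+\rho bx\|_q\le\|a+bx\|_2$ whenever $\rho\le 1/\sqrt{q-1}$. By homogeneity one may take $a=1$, so this is a one-variable calculus inequality. Expand $\tfrac12\big((1+\rho b)^q+(1-\rho b)^q\big)$ as a series of even powers of $b$, and compare it term by term with $(1+b^2)^{q/2}$, using $\rho^2(q-1)\le 1$ to control the ratio of consecutive coefficients. This handles $|b|\le 1$; the case $|b|>1$ reduces to it by swapping the roles of $a$ and $b$.

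Second, tensorize to $\{\pm1\}^N$ by induction on $N$. Condition on the last coordinate and apply the one-dimensional bound. Then use Minkowski's inequality in $L_{q/2}$ to interchange the $L_q$ norm over the first $N-1$ coordinates with the $L_2$ norm over the last coordinate; this is where $q\ge 2$ is used.

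Third, pass to the Gaussian case. By density, and because $U_\rho$ is a contraction on each $L_r$, it suffices to treat $f$ a polynomial. Replace each $x_i$ by $(\xi_{i,1}+\cdots+\xi_{i,M})/\sqrt{M}$ with independent uniform signs $\xi_{i,j}$. The Boolean noise operator acting on all $nM$ bits converges to $U_\rho$ as $M\to\infty$. Both $\|f\|_2$ and $\|U_\rho f\|_q$ converge by the multivariate CLT, with the required uniform integrability of polynomial moments coming from bounded Boolean moments.

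\textbf{Alternative route for Step 2.} If the Boolean route becomes messy, I would instead use Gross's equivalence with the Gaussian log-Sobolev inequality. Differentiate $t\mapsto\|U_{e^{-t}}f\|_{q(t)}$ with $q(t)=1+e^{2t}$ and show that the derivative is nonpositive. Either way, Step 1 is routine, and all the real difficulty sits in establishing Nelson's inequality.
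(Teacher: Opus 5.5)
The paper gives no proof of this fact; it cites Bonami and Gross. Your Step~1 is correct: it is the standard way to read the stated moment bound off Nelson's theorem. The tensorization (Minkowski in $L_{q/2}$) and the CLT passage in Step~2 are also the standard steps.

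Your reduction of $|b|>1$ to $|b|\le 1$ works. For $0\le a\le b$ one has $|a\pm\rho b|\le b\pm\rho a$, hence $\|a+\rho bx\|_q\le\|b+\rho ax\|_q$.

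\textbf{The gap} is the one-bit inequality for $|b|\le 1$. The coefficientwise comparison $\binom{q}{2k}\rho^{2k}\le\binom{q/2}{k}$ is false unless $q$ is an even integer. For every other $q$, both binomial series eventually have coefficients of changing sign, and a bound on ratios of consecutive coefficients says nothing once signs flip.

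Concretely, take $q=3$ and $\rho^2=1/2$. The left side is exactly $1+\tfrac{3}{2}b^2$, so its $b^6$ coefficient is $0$. The $b^6$ coefficient of $(1+b^2)^{3/2}$ is $\binom{3/2}{3}=-\tfrac{1}{16}$. For $q=7/2$ the left coefficient of $b^6$ is strictly positive while the right one is negative.

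Your ratio argument does work when $q$ is an even integer. Then all coefficients are nonnegative, and the ratio of consecutive-coefficient ratios is $(q-2k-1)\rho^2/(2k+1)\le(q-1)\rho^2\le 1$. So as written you obtain the fact for $q=4$ but not for $q=3$, which is the case used in Fact~\ref{fact:2norm}.

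\textbf{The standard repair} is to prove the dual one-bit inequality and then dualize. Let $r=q/(q-1)\in(1,2)$, so that $\sqrt{r-1}=1/\sqrt{q-1}$. Show that $\|a+\rho bx\|_2\le\|a+bx\|_r$ for $\rho\le\sqrt{r-1}$.

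Take $a=1$ and $|b|\le 1$. Every coefficient $\binom{r}{2k}$ is nonnegative, since the factors $r-2,\dots,r-2k+1$ are an even number of nonpositive numbers. Hence
$\tfrac12\big((1+b)^r+(1-b)^r\big)\ge 1+\tfrac{r(r-1)}{2}b^2\ge(1+(r-1)b^2)^{r/2}\ge(1+\rho^2b^2)^{r/2}$,
using $(1+t)^{\theta}\le 1+\theta t$ for $\theta=r/2\le 1$.

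The case $|b|>|a|$ follows from $\|a+bx\|_r=\|b+ax\|_r$ together with $a^2+\rho^2b^2\le b^2+\rho^2a^2$.

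The one-bit noise operator is self-adjoint, so H\"older duality turns this into exactly your $(2,q)$ two-point inequality. After that, your tensorization and limiting argument go through unchanged. Your log-Sobolev alternative would also be valid, but as sketched it only moves the real content into the Gaussian log-Sobolev inequality, which you would still have to prove.
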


As a simple corollary, we can relate the $L_1$-norm of a polynomial with the $L_2$-norm.
Specifically, we obtain the following folklore fact:

\begin{fact} \label{fact:2norm}
Let $p: \R^n \to \R$ be a degree-$d$ polynomial. Then
$\|p\|_2 \leq  2^{d/2} \|p\|_1$.
\end{fact}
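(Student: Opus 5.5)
The plan is to derive the statement from Fact~\ref{fact:hc} combined with log-convexity of $L_r$ norms (Hölder interpolation). No other tool from the paper seems to apply. The only real issue is the constant, and I will flag below that the route I can actually verify does not reach $2^{d/2}$.

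First I set up the interpolation. For $1<2<q$, choose $\theta\in(0,1)$ with $\frac12=\theta+\frac{1-\theta}{q}$, i.e.\ $\theta=\frac{q-2}{2(q-1)}$. Hölder's inequality then gives $\|p\|_2\le \|p\|_1^{\theta}\|p\|_q^{1-\theta}$. Substituting $\|p\|_q\le (q-1)^{d/2}\|p\|_2$ from Fact~\ref{fact:hc} and dividing by $\|p\|_2^{1-\theta}$ (harmless, since $\|p\|_2>0$ unless $p\equiv 0$) yields
\[
\|p\|_2\le (q-1)^{\frac{d(1-\theta)}{2\theta}}\,\|p\|_1=(q-1)^{\frac{dq}{2(q-2)}}\,\|p\|_1 .
\]
It remains to optimize over $q>2$. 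The exponent $g(q)=\frac{q\log(q-1)}{2(q-2)}$ equals $\log 3$ at $q=4$ and $\log 2$ at $q=3$, the latter giving $\|p\|_2\le 2^{3d/2}\|p\|_1$. As $q\to 2^+$, $g(q)$ decreases to $1$, giving $\|p\|_2\le e^{d}\|p\|_1$. The infinitesimal version of the same argument gives the same bound. Writing $F(t)=\log\|p\|_{1/t}$, Hölder says $F$ is convex in $t$, and Fact~\ref{fact:hc} bounds its one-sided slope at $t=1/2$ by $-2d$. Convexity then yields $F(1)\ge F(1/2)-d$.

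\textbf{The main obstacle is the constant, and I do not see how to close it.} Every choice of $q$ I can check in this scheme gives a base of $e$ or larger. A reverse Hölder step through some $q\in(1,2)$, together with the corresponding hypercontractive bound, gives the same limit. So the interpolation route, which I believe is what the paper intends for Fact~\ref{fact:2norm}, certifies only $\|p\|_2\le e^{d}\|p\|_1$, not $2^{d/2}$.

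The claimed constant is at least plausible. The example $p(x)=x^d$ has $\|p\|_2/\|p\|_1\approx 2^{d/2}$ up to polynomial factors, so $2^{d/2}$ would essentially be sharp. Reaching it would therefore need a genuinely sharper, extremal-type input beyond Fact~\ref{fact:hc} plus Hölder. The ingredient I would try first is a Hermite-expansion comparison showing that $x^d$-type polynomials maximize $\|p\|_2/\|p\|_1$ among degree-$d$ polynomials, with Fact~\ref{fact:hc} used only to control lower-order terms. I cannot at present carry that comparison out.

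For the paper's downstream use in comparing $L_1$ and $L_2$ approximation errors, only $\|p\|_2\le C^{d}\|p\|_1$ for some absolute constant $C$ matters, since it only changes the constant hidden in $2^{-O(k)}$. The interpolation argument above proves that version rigorously, with $C=e$.
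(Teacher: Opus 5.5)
\textbf{Verdict: the statement itself is false, so there is no proof to find; the gap you flagged is real, and the fallback bound you actually proved is correct.}

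Your interpolation is the paper's own route: H\"older between $L_1$ and $L_q$, then Fact~\ref{fact:hc} at $q=3$. Your derivation is right. $q=3$ gives $\|p\|_2\le 2^{3d/2}\|p\|_1$, and letting $q\to 2^+$ gives $e^{d}$. (Minor slip: $g(3)=\tfrac32\log 2$, not $\log 2$; the bound $2^{3d/2}$ you state is the correct one.)

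The paper reaches $2^{d/2}$ only through the step ``by Cauchy--Schwarz, $\|p\|_2\le \|p\|_1^{1/2}\|p\|_3^{1/2}$'', and that step is false. Cauchy--Schwarz applied to $|p|^{1/2}\cdot |p|^{3/2}$ gives $\|p\|_2^2\le \|p\|_1^{1/2}\|p\|_3^{3/2}$, i.e.\ the exponents $1/4,3/4$ that your $\theta$ produces. The paper's version already fails for $p(x)=x$, where its right-hand side is about $0.966<1$. With the correct exponents, the paper's argument yields exactly your $2^{3d/2}$.

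The obstacle you identify cannot be overcome. The step of your plan that breaks is the guess that $x^d$ is extremal. Take $d=6$ and
$p(x)=He_6(\sqrt{2}x)=8x^6-60x^4+90x^2-15$.
\begin{itemize}
\item Gaussian moments give $\|p\|_2^2=148905$, so $\|p\|_2\approx 385.9$.
\item An antiderivative of $p\phi$ is $15\Phi(x)-\phi(x)(30x-20x^3+8x^5)$. Integrating $|p|\phi$ piecewise between the zeros $\pm0.436,\pm1.336,\pm2.351$ gives $\|p\|_1\approx 38.2$.
\item Hence $\|p\|_2/\|p\|_1\approx 10.1>8=2^{6/2}$.
\end{itemize}

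The same happens for every large $d$. $He_d(\sqrt2 x)$ has its zeros spread over $[-\sqrt{2d},\sqrt{2d}]$, so $|p|e^{-x^2/2}$ is a Hermite function with roughly flat envelope there, while $p^2e^{-x^2/2}$ peaks further out. Concretely,
$\|p\|_2^2=d!\,\frac{1}{2\pi}\int_0^{2\pi}(2+\cos\theta)^d\,d\theta\ge d!\,3^d/d^{O(1)}$ and $\|p\|_1\le \sqrt{d!}\,d^{O(1)}$.
So the ratio is $3^{d/2}/d^{O(1)}$, exponentially larger than $2^{d/2}$.

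The correct form of the fact is the one you proved: $\|p\|_2\le e^{d}\|p\|_1$, or $2^{3d/2}$ via $q=3$. As you note, this is all Lemma~\ref{lem:logeps} needs, since only a bound of the form $2^{O(k)}$ is used there.
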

\begin{proof}
Using the hypercontractive inequality for $q=3$ gives $\|p\|_3 \leq 2^{d/2} \|p\|_2$.
By the Cauchy-Schartz inequality, we have that $\|p\|_2 \leq \|p\|_1^{1/2} \|p\|_3^{1/2}$.
Combining the two bounds gives the proof.
\end{proof}

\section{Main Results} \label{sec:results}

The structure of this section is as follows: In \Cref{ssec:generic-SQ}, we prove our generic 
SQ lower bound (\Cref{thm:LP-general}) for smoothed agnostic learning of any concept class 
$\mathcal{C}$ satisfying appropriate closure properties. Roughly speaking, this result shows 
that the SQ complexity of the task is $d^{\Omega(m)}$, where $m$ is the optimal degree of $L_1$-polynomial approximation of a smoothed version of the target functions. 
In \Cref{ssec:apps}, we prove explicit degree lower bounds for the class of halfspaces, 
thereby establishing \Cref{thm:main-inf}. 

\subsection{Generic SQ Lower Bound} \label{ssec:generic-SQ}

\begin{theorem}[Generic SQ Lower Bound for Smoothed Agnostic Learning] \label{thm:LP-general}
Let $\mathcal{C}$ be a class of Boolean functions on $\R^d$
that for some $f:\R^k \to \{\pm 1 \}$ contains all functions $f(P(x))$ any projection
$P: \R^d \to \R^k$ (i.e., a linear map with $P P^{\top} = I_d$).
Suppose that for some positive integer $m$ the following holds for some $\eps, \sigma>0$:
for any degree at most $m$ polynomial $p: \R^k \to \R$ 
\begin{equation} \label{eqn:smoothed-lb}
\E_{x \sim \mathcal{N}_k} [|T_{\sigma}f(x) - p(x)|] > \eps+2\tau \;,
\end{equation}
where $\tau \eqdef \Theta_{k,m}(d)^{-m/5}$ with the implied constant sufficiently small. 
Then any SQ algorithm that $\sigma$-smoothed-agnostically learns $\mathcal{C}$ to excess
error $\eps$ must either make $2^{d^{\Omega(1)}}$ queries 
or a query of accuracy better than $\tau$.
\end{theorem}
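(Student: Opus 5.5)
The plan follows the strategy of \cite{DKPZ21}: build a hard instance of conditional NGCA in which the hidden $k$-dimensional part is a distribution $(X,y)$ on $\R^k\times\{\pm1\}$ with three properties. First, $X\sim\mathcal{N}_k$. Second, for each $y\in\{\pm1\}$, the conditional law $X\mid y$ matches all moments of degree at most $m$ with $\mathcal{N}_k$. Third, $\E[y\,T_\sigma f(X)]$ is large, specifically larger than $\eps+2\tau$. Write $y$ through its conditional mean $g(x)=\E[y\mid X=x]$, a function $\R^k\to[-1,1]$. Then the moment-matching condition is equivalent to $\E_{x\sim\mathcal{N}_k}[g(x)q(x)]=0$ for every polynomial $q$ of degree at most $m$, provided we also take $\E[g]=0$ so that $\Pr[y=1]=1/2$. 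The correlation we want is $\E[g\,T_\sigma f]$.

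The first step is LP duality. Consider the primal problem: maximize $\E[g\,T_\sigma f]$ over measurable $g$ with $|g|\le 1$ and $g\perp \mathcal{P}_m$. By the standard duality between $L_\infty$ and $L_1$ (Hahn--Banach, or a minimax argument over the compact convex set of such $g$ in the weak-* topology), the value of this problem equals
\[
\min_{\deg p\le m}\E_{x\sim\mathcal{N}_k}\bigl[|T_\sigma f(x)-p(x)|\bigr].
\]
Weak duality is immediate: $\E[g T_\sigma f]=\E[g(T_\sigma f-p)]\le \E|T_\sigma f-p|$. Strong duality follows because $\mathcal{P}_m$ is finite-dimensional, hence closed, so the distance from $T_\sigma f$ to $\mathcal{P}_m$ in $L_1$ is attained by a norming functional that annihilates $\mathcal{P}_m$. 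By hypothesis \eqref{eqn:smoothed-lb}, this yields a $g$ with $\E[g\,T_\sigma f]>\eps+2\tau$. Since constants lie in $\mathcal{P}_m$, we automatically get $\E[g]=0$.

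The second step defines the hard distributions. For a random projection $P$, let $D_P$ be the distribution with $x\sim\mathcal{N}_d$ and $y$ having conditional mean $g(Px)$. Let the null distribution $D_0$ have $x\sim\mathcal{N}_d$ and $y$ a uniform random sign independent of $x$.
\begin{itemize}
\item Under $D_0$, every hypothesis has error exactly $1/2$.
\item Under $D_P$, the concept $f(Px)$ has smoothed error
\[
\tfrac12-\tfrac12\E_{x}\E_z\bigl[g(Px)f(Px+\sigma Pz)\bigr]=\tfrac12-\tfrac12\E\bigl[g\,T_\sigma f\bigr],
\]
using that $Pz\sim\mathcal{N}_k$. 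Hence $\opt_\sigma<\tfrac12-(\eps+2\tau)/2$.
\end{itemize}
The stated inequality should be normalized so that this gap is large enough; if needed, I would rescale $\eps$ by a constant factor. A learner with excess error $\eps$ then outputs $h$ with error below $1/2-\tau$ under $D_P$. One extra statistical query of tolerance $\tau$, estimating $\Pr[h(x)\ne y]$, distinguishes $D_P$ from $D_0$. So learning reduces to this testing problem.

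The third step, and the main technical point, is the SQ hardness of the testing problem. I would invoke the conditional NGCA framework (\cite{DKS17-sq,DKPZ21,DIKR25}). Condition on $y$: each conditional law $x\mid y$ under $D_P$ is a $k$-dimensional NGCA instance whose hidden marginal $A_y$ (the law of $X\mid y$) matches $m$ moments with $\mathcal{N}_k$. Averaged over $y$, the mixing weights are the same under $D_P$ and $D_0$. The pairwise correlation bound $|\chi_{D_0}(D_P,D_{P'})|\le \|PP'^{\top}\|^{m+1}\chi^2(A_y,\mathcal{N}_k)$ then holds. There is a subtlety: $\chi^2$ may be infinite when $g$ takes the values $\pm1$. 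I would handle this by replacing $g$ with $(1-\delta)g$, which costs only a negligible amount of correlation, so that $\chi^2\le O(1)$.

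Finally, I would take a family of $2^{d^{\Omega(1)}}$ projections with pairwise $\|PP'^{\top}\|\le d^{-\Omega(1)}$. The pairwise correlations are then $d^{-\Omega(m)}$, and the standard SQ dimension lemma gives the bound with $\tau=\Theta_{k,m}(d)^{-m/5}$. The dependence on $k,m$ enters through the cardinality and near-orthogonality of the projection family, and I expect this bookkeeping to be the fiddliest part.
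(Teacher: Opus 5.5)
Your proposal is correct and takes the paper's route. LP duality yields a $g$ with $\|g\|_\infty\le 1$, orthogonal to all polynomials of degree at most $m$, with $\E[g\,T_\sigma f]>\eps+2\tau$; you derive it from the Hahn--Banach distance formula in $L_1(\mathcal{N}_k)$, where the paper cites Corollary~55 of \cite{DKPZ21}. Planting $g$ along a random projection gives a conditional NGCA instance that is SQ-hard to distinguish from an independent label by \cite{DIKR25}, and one extra query of $\Pr[h(x)\neq y]$ completes the reduction.

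Your factor-of-$2$ caveat is justified, and the paper has the same slip: it calls (iv) equivalent to $\E[T_\sigma f(X)g(X)]>\eps+2\tau$, when (iv) actually requires $2\eps+4\tau$. So the hypothesis should read $2\eps+4\tau$ in place of $\eps+2\tau$, which pushes the learner's error below $1/2-2\tau$. That is the gap a tolerance-$\tau$ query needs; $1/2-\tau$ is not enough.

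Your $\chi^2$ worry is unfounded. Since $X\mid y$ has density $(1+y\,g)\phi_k\le 2\phi_k$, you get $\chi^2(A_y,\mathcal{N}_k)=\E[g^2]\le 1$, so no truncation of $g$ is needed.
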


To prove Theorem~\ref{thm:LP-general}, we need the following proposition:

\begin{proposition} \label{prop:moment-matching}
In the context of Theorem~\ref{thm:LP-general},
there exists a distribution $(X,Y)$ on $\R^k \times \{\pm 1\}$ such that:
\begin{enumerate}[leftmargin=0.8cm]
\item[(i)] The $X$-marginal is a standard Gaussian $\mathcal{N}_k$.
\item [(ii)] The expectation of $Y$ is $0$.
\item[(iii)] The conditional distributions $X \mid Y=1$ and $X \mid Y=-1$
each match their first $m$ moments with the standard Gaussian $\mathcal{N}_k$.
\item[(iv)] If $Z$ is an independent standard Gaussian on $\R^k$, then
$\Pr[f(X+\sigma Z) \neq Y] < 1/2 - \eps - 2\tau.$
\end{enumerate}
\end{proposition}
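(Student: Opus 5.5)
We set up a linear program over the conditional expectation $g(x) \eqdef \E[Y \mid X=x]$, where $g:\R^k \to [-1,1]$. Since $X\sim\mathcal{N}_k$, any measurable $g$ with values in $[-1,1]$ defines a joint distribution $(X,Y)$ via $\Pr[Y=1\mid X=x]=(1+g(x))/2$, and (i) holds automatically. Condition (ii) is $\E[g(X)]=0$. Under (ii) we have $\Pr[Y=\pm1]=1/2$, so the density of $X\mid Y=\pm 1$ is $(1\pm g(x))\phi_k(x)$. Hence (iii) is equivalent to $\E_{x\sim\mathcal{N}_k}[g(x)q(x)]=0$ for every polynomial $q$ of degree at most $m$; the constant polynomial already gives (ii). For (iv), independence of $Z$ gives
\[
\Pr[f(X+\sigma Z)\neq Y] = \tfrac12 - \tfrac12\E[Y f(X+\sigma Z)] = \tfrac12-\tfrac12\E_{x\sim\mathcal{N}_k}[g(x)\,T_\sigma f(x)].
\]
It therefore suffices to find $g$ with $\|g\|_\infty\le 1$ and $g\perp \mathcal{P}_m$ (polynomials of degree at most $m$) such that $\E[g\,T_\sigma f] > 2(\eps+2\tau)$.

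This is where LP duality enters. Consider the primal problem of maximizing $\E[g\, T_\sigma f]$ over $g\in L_\infty(\mathcal{N}_k)$ with $\|g\|_\infty\le1$ and $g\perp\mathcal{P}_m$. We claim its value equals $\min_{p\in\mathcal{P}_m}\E|T_\sigma f - p|$, which is the distance from $T_\sigma f$ to $\mathcal{P}_m$ in $L_1(\mathcal{N}_k)$. This is the standard Hahn--Banach/duality statement: the distance from a point to a subspace in a normed space equals the supremum of $\langle g, T_\sigma f\rangle$ over unit-norm functionals $g$ annihilating the subspace. Here $(L_1)^*=L_\infty$, and the minimum over $\mathcal{P}_m$ is attained because $\mathcal{P}_m$ is finite-dimensional. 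For the easy direction, any feasible $g$ satisfies $\E[g\,T_\sigma f]=\E[g(T_\sigma f-p)]\le \E|T_\sigma f-p|$. The reverse direction is the Hahn--Banach step.

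Hypothesis \eqref{eqn:smoothed-lb} says that this distance $\delta$ exceeds $\eps+2\tau$. Duality then gives a feasible $g$ with $\E[g\,T_\sigma f]$ arbitrarily close to $\delta$, in particular larger than $\eps+2\tau$. This yields $\Pr[f(X+\sigma Z)\neq Y]<\tfrac12-\tfrac12(\eps+2\tau)$.

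The main obstacle is the factor of $2$: (iv) as stated asks for $\tfrac12-\eps-2\tau$, i.e.\ $\E[g\,T_\sigma f]>2\eps+4\tau$. I would handle this in one of two ways. The first is to apply the hypothesis with $\eps$ replaced by $2\eps$; since $f$ maps to $\{\pm1\}$, the theorem's constants can absorb the rescaling, as they are only up to constant factors. The second is to note that the argument in Theorem~\ref{thm:LP-general} only needs a gap of order $\eps+2\tau$ below $1/2$, and to carry the factor $\tfrac12$ through consistently. In writing the proof, I would state the precise inequality obtained and adjust the constant in \eqref{eqn:smoothed-lb} accordingly.

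The only other technical point is the rigorous justification of duality in infinite dimensions. One option is the Hahn--Banach argument above, where $T_\sigma f\in L_1(\mathcal{N}_k)$ since it is bounded. Alternatively, one can discretize $\R^k$ into finitely many cells plus a tail and apply finite LP duality, then pass to a limit. The Hahn--Banach route is cleaner, and I would use it.
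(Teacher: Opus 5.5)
Your argument is the paper's argument. You encode $(X,Y)$ by $g(x)=\E[Y\mid X=x]$, reduce (ii)--(iv) to $\|g\|_\infty\le 1$, $g\perp\mathcal{P}_m$ and a lower bound on $\E[g\,T_\sigma f]$, and then use duality so that infeasibility of the dual is exactly hypothesis \eqref{eqn:smoothed-lb}. The only difference is in justifying the duality step. The paper imports an infinite-dimensional LP duality result (Corollary 55 of \cite{DKPZ21}). Your Hahn--Banach formulation instead uses $(L_1(\mathcal{N}_k))^*=L_\infty(\mathcal{N}_k)$ and the closedness of the finite-dimensional $\mathcal{P}_m$. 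It is self-contained and even gives a $g$ with $\E[g\,T_\sigma f]=\delta$ exactly.

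The factor of $2$ you flag is genuine, and it is a slip in the paper rather than in your reasoning. The paper asserts that (iv) is equivalent to $\E[T_\sigma f\,g]>\eps+2\tau$. But $(1-\E[T_\sigma f\,g])/2<1/2-\eps-2\tau$ is equivalent to $\E[T_\sigma f\,g]>2(\eps+2\tau)$.

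Your weak-duality direction shows more. Every $(X,Y)$ satisfying (i)--(iii) has $\Pr[f(X+\sigma Z)\neq Y]\ge 1/2-\delta/2$. So whenever $\eps+2\tau<\delta\le 2(\eps+2\tau)$, the proposition as stated fails outright. The hypothesis of Theorem~\ref{thm:LP-general} itself must therefore be strengthened to $\E[|T_\sigma f-p|]>2(\eps+2\tau)$.

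\emph{First remedy.} This is the right fix, but the whole right-hand side must be doubled. Doubling only $\eps$ yields $1/2-\eps-\tau$.

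\emph{Second remedy.} This does not work as phrased. In the reduction, the learner's excess error $\eps$ is added to $\Pr[f(X+\sigma Z)\neq Y]$, so the gap below $1/2$ must strictly exceed $\eps+2\tau$, not merely be of that order. With a gap of $(\eps+2\tau)/2$ you only get $\Pr[h(x)\neq Y]<1/2+\eps/2-\tau$, which distinguishes nothing. Carrying the $1/2$ through therefore means proving hardness only for excess error about $\eps/2$, which is the same rescaling as the first remedy.

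Either way the cost is harmless downstream. The reason is not that $f$ is Boolean. It is that the degree bound $(\sigma+\eps)^{-2}+\log(1/\eps)$ in Theorem~\ref{thm:LTF-degree-lb} changes only by a constant factor under $\eps\mapsto 2\eps$.
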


We start by showing how Proposition~\ref{prop:moment-matching}
can be used to prove Theorem~\ref{thm:LP-general}.

\smallskip

\begin{proof}[Proof of Theorem~\ref{thm:LP-general}]
We will use the distribution $(X,Y)$, whose existence is established in 
Proposition~\ref{prop:moment-matching}, 
to produce a \new{conditional} non-Gaussian component analysis construction~\cite{DIKR25}.
We begin by defining a hard ensemble.
In particular, for a random 
projection $P: \R^d \to \R^k$,
we define the distribution on $\R^d \times \{\pm 1\}$
as $(P^{\top}(X)+X', Y)$, where $(X,Y)$ is the distribution 
from Proposition~\ref{prop:moment-matching}
and $X'$ is a standard Gaussian supported on the subspace $V^{\perp}$
where $V$ is the image of $P^{\top}$. Notice that this gives an instance of a Relativized 
Hidden Subspace Distribution for $A=(X,y)$ and $U=P^\top$ 
from Definition 3.1 of \cite{DIKR25}.

Furthermore, as $X|Y=1$ and $X|Y=-1$ both match $m$ moments with a standard Gaussian, 
Theorem 3.3 of \cite{DIKR25} implies that any Statistical Query algorithm 
distinguishing $(x,Y)$ (with randomly chosen $P$) from $(x,Y')$ 
(where $Y'$ is an independent, uniform $\{\pm 1\}$ random variable) 
must either use a query of accuracy better than $\tau$ 
or use $2^{d^{\Omega(1)}}$ queries.

On the other hand, we claim that given a smoothed-agnostic learner for $\mathcal{C}$ with excess error $\eps$,
an additional statistical query of tolerance $\tau$ would allow us to correctly solve this distinguishing problem.
This is because the learned hypothesis $h: \R^d \to \{\pm 1\}$ would have the property that 
\begin{eqnarray*}
\Pr[h(x) \neq Y] &\leq& \opt_{\sigma} + \eps \\
                         &\leq&  \Pr[f(P(x+ \sigma z)) \neq Y] + \eps \\
&=& \Pr[f(X+\sigma Z) \neq Y] + \eps \\
&<& 1/2 - 2\tau \;.
\end{eqnarray*}
On the other hand, since $Y'$ is independent of $x$, we have
$\Pr[h(x) \neq Y'] = 1/2.$
As a single statistical query can distinguish between
$\Pr[h(x) \neq Y]$ being $1/2$ or less than $1/2-2\tau$,
this completes the reduction and the proof of Theorem~\ref{thm:LP-general}.
\end{proof}

To prove Proposition~\ref{prop:moment-matching}, we use linear programming duality 
techniques, similar to those from~\cite{DKPZ21} 
(which corresponds to the $\sigma=0$ special case of our proof below).

\smallskip

\begin{proof}[Proof of Proposition~\ref{prop:moment-matching}]
To begin with, we note that finding a distribution $(X,Y)$
is  equivalent to specifying the function $g(x) = \E[Y \mid X=x]$.
In particular, note that $g$ must be a function from $\R^k$ to $[-1,1]$. 
\new{We now translate the desired properties of $(X,Y)$
to corresponding properties of the function $g$.}

We start with the following simple claim:
\begin{claim}\label{clm:reform1}
Conditions (ii) and (iii) on $(X,Y)$ together are equivalent to the condition
that $g$ satisfies $\E[g(X)p(X)] = 0$ for any degree at most $m$ polynomial $p: \R^k \to \R$.
\end{claim}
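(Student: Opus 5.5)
The plan is to express every quantity in conditions (ii) and (iii) through $g(x)=\E[Y\mid X=x]$ and the marginal. Condition (i) fixes $X\sim\mathcal{N}_k$, so we write $\pi_{\pm}=\Pr[Y=\pm 1]$. The conditional laws $X\mid Y=\pm1$ then have densities
\[
\phi_k(x)\,\frac{1\pm g(x)}{2\pi_\pm},
\]
since $\Pr[Y=\pm1\mid X=x]=(1\pm g(x))/2$. We also have $\E[Y]=\E[g(X)]$. The claim, in both directions, will come from expanding moment conditions against the monomials $x^\alpha$ with $|\alpha|\le m$, equivalently against all degree-$\le m$ polynomials $p$ by linearity.

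For the forward direction, assume (ii) and (iii). Condition (ii) gives $\E[g(X)]=0$, so $\pi_+=\pi_-=1/2$. By (iii), for each degree-$\le m$ polynomial $p$ we have $\E[p(X)\mid Y=1]=\E[p(G)]$ with $G\sim\mathcal{N}_k$. Using the conditional density, this reads $\E[p(X)(1+g(X))]=\E[p(G)]$. Since $X\sim\mathcal{N}_k$, we get $\E[p(X)]=\E[p(G)]$, and subtracting yields $\E[g(X)p(X)]=0$. Here the integrals are finite because $|g|\le1$ and Gaussian polynomial moments are finite.

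For the converse, assume $\E[g(X)p(X)]=0$ for every degree-$\le m$ polynomial $p$. Taking $p\equiv1$ gives $\E[Y]=\E[g(X)]=0$, which is (ii), and also $\pi_\pm=1/2$. For any degree-$\le m$ polynomial $p$,
\[
\E[p(X)\mid Y=\pm1]=\E[p(X)(1\pm g(X))]=\E[p(X)]\pm\E[p(X)g(X)]=\E[p(G)],
\]
which is (iii).

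Nothing here is really an obstacle. The one point to watch is that the equal-weight normalization $\pi_\pm=1/2$ must come from the $p\equiv1$ case of the hypothesis in the converse, and from (ii) in the forward direction, before the conditional densities are simplified. If one did not fix $\pi_\pm$ first, the density identity would carry the factor $1/(2\pi_\pm)$ and the cancellation would fail. We also note that $\pi_\pm>0$ automatically once $\E[g]=0$ and $|g|\le1$, so the conditionals are well defined.
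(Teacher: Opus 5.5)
Your proposal is correct and follows essentially the same route as the paper: in both, $p\equiv 1$ gives (ii) and $\Pr[Y=\pm1]=1/2$, and the converse expands $\E[p(X)\mid Y=\pm1]=\E[p(X)(1\pm g(X))]$. The only difference is in the forward direction: you combine the $Y=1$ half of (iii) with the Gaussian marginal (i), whereas the paper writes $\E[g(X)p(X)]=\E[Yp(X)]$ as half the difference of the two conditional expectations, which vanishes because both match the Gaussian by (iii).
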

\begin{proof}
One direction of the equivalence is straightforward.
Suppose that conditions (ii) and (iii) on $(X, Y)$ are satisfied. Since
$g(x) = \E[Y \mid X=x]$, it follows 
that for any degree at most $m$ polynomial $p$
we have that
\begin{align*}
\E[g(X)p(X)] & = \E[Yp(X)]\\
& = \pr(Y=1)\E[p(X)|Y=1] - \pr(Y=-1)\E[p(X)|Y=-1]\\
& = (\E[p(X)|Y=1] - \E[p(X)|Y=-1])/2\\
& = 0.
\end{align*}

For the opposite direction, suppose that
$\E[g(X)p(X)] = 0$ for any degree at most $m$ polynomial $p: \R^k \to \R$.
Setting $p \equiv 1$ gives condition (ii). 
Note that the latter implies that  $\Pr[Y=1] = \Pr[Y=-1] = 1/2$.
Given this, we have that for any degree at most $m$ polynomial $p$, it holds
\begin{eqnarray*}
\E[p(X) \mid Y=1] = \frac{\E[p(X) \mathbf{1}\{Y=1\}]}{\Pr[Y=1]}
= 2 \, \E[p(X) \mathbf{1}\{Y=1\}] = \E[p(X)(g(X)+1)] = \E[p(X)] \;.
\end{eqnarray*} 
An identical argument can be used for $\E[p(X) \mid Y=-1]$,
giving (iii). This completes the proof of Claim~\ref{clm:reform1}.
\end{proof}

We finally translate condition (iv).
We can write
\begin{eqnarray*}
\Pr[f(X+\sigma Z) \neq Y] &=& (1-\E[f(X+\sigma Z) Y])/2 \\
&=& (1-\E[(T_{\sigma} f)(X) Y])/2 \\
&=& (1-\E[(T_{\sigma} f)(X) g(X)])/2 \;.
\end{eqnarray*} 
Thus, condition (iv) is equivalent to the condition
\begin{equation} \label{eqn:T-cond}
\E[T_{\sigma} f(X) g(X)] > \eps + 2\tau \;.
\end{equation}
In summary, to prove the proposition, it suffices to find a function $g: \R^k \to \R$ such that
\begin{itemize}
\item[(A)] $\|g \|_{\infty} \leq 1$.
\item[(B)] $\E_{X \sim \mathcal{N}_k}[g(X)p(X)] = 0$ for all degree at most $m$ polynomials $p$.
\item[(C)] $\E_{X \sim \mathcal{N}_k}[(T_{\sigma}) f(X) g(X)] > \eps + 2\tau$.
\end{itemize}
We note that constraints (A)-(C) define an infinite linear program.
By a generalization of standard LP duality to this setting, 
this linear program is feasible if and only if the corresponding dual program is infeasible. This would follow immediately from the standard LP duality result 
were it a finite linear program. In our case, we need to use some slightly 
more complicated analytic tools, and follows directly from 
\new{Corollary 55} of~\cite{DKPZ21}.

This asks if there is a linear combination of constraints that reach a contradiction.
In particular, the constraints in part (A) combine to form constraints of the form
$\E[g(X)a(X)] \leq \|a\|_1$ for all $a$.  
The constraints in part (B) in general are of the form $\E[g(X)p(X)] = 0$.
The constraints in part (C) are just multiples of the single constraint
that $\E[T_{\sigma} f(X) g(X)] > \eps+2\tau$.

Thus, a solution to the dual program must consist of a function $a$,
a polynomial $p$ and a negative real number $c$ such that
$$a(X) + p(X) + c \, T_{\sigma} f(X)$$
is identically $0$
and $$\|a\|_1 + c(\eps+2\tau) < 0 \;.$$
By homogeneity, we can take $c=-1$, so $a(X) = T_{\sigma} f(X) - p(X)$
and $\|a\|_1 < \eps+2\tau.$ In particular, there is a solution to the dual program
if and only if there is a degree at most $m$ polynomial $p$
such that for $\|T_{\sigma} f(X) - p(X)\|_1 < \eps+2\tau$.
However, this is ruled out by assumption.
Therefore, our primal LP must have a solution, proving Proposition~\ref{prop:moment-matching}.
\end{proof}

\subsection{Smoothed Agnostic Learning of Halfspaces: Proof of \Cref{thm:main-inf}} \label{ssec:apps}

Our main application is for smoothed agnostic learning of halfspaces,
when the distribution $D_x$ is the standard Gaussian. This corresponds to the
case $k=1$ and $f(x) = \sign(x)$ in Theorem~\ref{thm:LP-general}.

\Cref{thm:main-inf} follows directly from Theorem~\ref{thm:LP-general} 
and the following result, which is established in this section. 

\begin{theorem}[Smoothed $L_1$-Degree Lower Bound for the Sign Function] \label{thm:LTF-degree-lb}
Let $f(x) = \sign(x)$. For $\sigma, \eps>0$, we have that for any polynomial $p: \R \to \R$
of degree at most a small constant multiple of 
$m = \min \{(\sigma+\eps)^{-2} + \log(1/\eps)\}$, it holds that
$\E_{x \sim \mathcal{N}} [|T_{\sigma}f(x) - p(x)|] > \eps$.
\end{theorem}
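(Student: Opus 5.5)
The plan is to split the bound into its two terms and prove each separately: one degree lower bound of order $\log(1/\eps)$ and one of order $(\sigma+\eps)^{-2}$. If a polynomial $p$ has degree below a small constant times the larger of the two, it is below a small constant times one of them, so it is enough to show that each bound separately forces $\|T_\sigma f - p\|_1 > \eps$. This is the route sketched in the technical overview, with the two pieces corresponding to \Cref{lem:logeps} and \Cref{lem:sigma-sq}.

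\emph{The $\log(1/\eps)$ bound.}
1. Since $\sign$ is homogeneous, $T_\sigma f = U_a f$ with $a = 1/\sqrt{1+\sigma^2}$.
2. The Hermite coefficients of $\sign$ are explicit: the odd ones satisfy $|\hat f(n)| \asymp n^{-3/4}$. Hence $\widehat{T_\sigma f}(n) = a^n \hat f(n)$.
3. Take $n$ odd, $n = k+1$ or $k+2$. For any $p$ of degree $\le k$ we get $\langle T_\sigma f - p, h_n\rangle = a^n \hat f(n)$. This is at least $2^{-O(k)}$ when $\sigma = O(1)$; if $\sigma$ is large, the other term dominates anyway, or the constant is absorbed.
4. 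Convert this to an $L_1$ lower bound. Write $r = T_\sigma f - p$. By Hölder, $|\langle r, h_n\rangle| \le \|r\cdot 1_{|r|\le M}\|_1 \|h_n\|_\infty$-type bounds fail, so instead split by size.
5. Let $B = \{|p| \ge R\}$. On $B^c$ we have $|r| \le R+1$.
6. Use hypercontractivity (\Cref{fact:hc}) for $p$, and bound $\|p\|_2$ via \Cref{fact:2norm} by $2^{O(k)}(\|r\|_1+1)$. Also use $\|h_n\|_q \le (q-1)^{n/2}$.
7. Bound $|\langle r,h_n\rangle|$ by two pieces: the contribution on $B^c$, which is at most $\|r\|_1^{1/2}$ times an $L_2$-type bound, and the contribution on $B$, which is small by tail bounds.
8. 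The cleanest form is $|\langle r, h_n\rangle| \le \|r\|_1^{\theta}\cdot 2^{O(k)}$ for some fixed $\theta>0$. This comes from Hölder interpolation $\|r\|_2 \le \|r\|_1^{1/4}\|r\|_{3}^{3/4}$ together with $\|r\|_3 \le 1 + 2^{O(k)}\|p\|_2$, which in turn is at most $2^{O(k)}$ provided $\|r\|_1 \le 1$.
9. Combining with step 3, $\|r\|_1 \ge 2^{-O(k)}$, which exceeds $\eps$ when $k \le c\log(1/\eps)$.

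\emph{The $(\sigma+\eps)^{-2}$ bound.}
1. Use the dual, moment-matching viewpoint: exhibit $X$ that matches $k$ moments with $\mathcal{N}$ such that $|\E[T_\sigma f(X)] - \E[T_\sigma f(G)]| > 2\eps'$ for some threshold $f(x) = \sign(x-t)$.
   One point needs checking: shifts are allowed via the closure of the class, but the theorem as stated is for $\sign(x)$ only. The construction is therefore best arranged with a function $g(x)=\E[Y\mid X=x]$ built from a shifted threshold averaged over $t$, or else by reflecting to make it odd.
2. To construct $X$, take $k$-wise independent Gaussians $G_1,\dots,G_N$ whose sum modulo $1$ avoids a constant fraction of residues (\Cref{prop:first-mm-ltfs}). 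Then $X = \sum G_i/\sqrt N$, with $N \asymp k$, matches $k$ moments but avoids a constant fraction of residues modulo $1/\sqrt k$.
3. Since $\sigma,\eps \ll 1/\sqrt{k}$, averaging $T_\sigma \sign(x-t)$ over a random $t$ in a residue class gives a discrepancy that is a constant times the lattice scale. With an appropriately weighted test, this is $\gg \eps$.
4. Duality then finishes the argument. If $\|T_\sigma f - p\|_1 \le \eps$ with $\deg p \le k$, then $|\E T_\sigma f(X) - \E T_\sigma f(G)| \le \E|T_\sigma f - p|(X) + \E|T_\sigma f-p|(G)$. Bounding the $X$ term requires the density of $X$ to be bounded by $O(1)$ times Gaussian, which I would build into the construction. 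This gives a contradiction.

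\textbf{Main obstacle.} The crux is the explicit $k$-wise independent Gaussian construction: making the sum modulo $1$ far from uniform while keeping the density ratio to the Gaussian bounded. I would first try a Fourier approach, adding to Gaussian noise a small periodic perturbation whose Fourier modes are supported on sums of at least $k+1$ variables.
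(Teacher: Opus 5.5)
Your treatment of the $\log(1/\eps)$ term is correct and is essentially the paper's proof of \Cref{lem:logeps}. The coefficient $a^n\hat f(n)$ at an odd $n>\deg p$ certifies $\|T_\sigma f-p\|_2\ge 2^{-O(k)}$, and hypercontractivity plus \Cref{fact:2norm} transfers this to $L_1$; your interpolation $\|r\|_2\le\|r\|_1^{1/4}\|r\|_3^{3/4}$ replaces the paper's Paley--Zygmund step. Your aside about large $\sigma$ is wrong, however. There $(\sigma+\eps)^{-2}$ is small, not dominant, and $a^n\approx\sigma^{-n}$ cannot be absorbed into $2^{-O(k)}$. In fact the degree-$1$ Hermite truncation of $U_af$ has $L_1$ error at most $a^3<\sigma^{-3}$, so the statement is false for $\sigma\ge\eps^{-1/3}$ and small $\eps$. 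You must assume $\sigma=O(1)$, as the paper does.

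The $(\sigma+\eps)^{-2}$ half has a genuine gap. The distribution $X$ is the entire content of \Cref{lem:sigma-sq}, and you leave it open; step 3 also never pins down the test function. The missing idea (\Cref{prop:first-mm-ltfs}) is as follows. Split $\mathcal{N}=\alpha U([0,1])+(1-\alpha)E$ and draw the mixture labels of $X_1,\dots,X_{Ck}$ independently. Only when at least $k+1$ coordinates come from the uniform part, condition those uniform coordinates on $\sum_i X_i \bmod 1\in S$. This gives three properties. First, any $k$ coordinates leave a free uniform coordinate that makes the sum uniform mod $1$, so the $X_i$ are exactly $k$-wise independent standard Gaussians. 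Second, the density ratio is at most $1/|S|$, since a Gaussian-generating process is conditioned on an event of probability $|S|$. Third, conditioning is skipped only when $\mathrm{Bin}(Ck,\alpha)\le k$, which has probability at most $2^{-k}$ for $C$ large.

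This exact cancellation is what your Fourier plan lacks. Integrating out a Gaussian coordinate only damps a mode $e^{2\pi i n\sum_{i\in B}x_i}$ by $e^{-2\pi^2n^2}$ instead of killing it, whereas $\int_0^1 e^{2\pi i n u}\,du=0$ for $n\neq 0$. The paper then takes $S=[1/2,1]$ and compares $T_\sigma\sign(x)$ with $T_\sigma\sign(x-t')$, where $t'=1/(2\sqrt{Ck})$. Their difference is a $\sigma$-smoothed indicator of an interval carrying almost no $X$-mass, so its mean is $\Omega(k^{-1/2})$ under $\mathcal{N}$ but $O(\sigma)$ under $X$. Your concern about shifts is legitimate: this only shows that one of the two functions is hard to approximate. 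That suffices for halfspaces via \Cref{thm:LP-general}, but not literally for $\sign(x)$.

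Alternatively, your Fourier instinct works directly in one dimension if you settle for approximate moment matching, which is all the degree bound needs. Let $X$ have density $\phi(x)(1+\sin(\omega x))$ with $\omega=2\pi\sqrt{Ck}$, and let $G\sim\mathcal{N}$.
\begin{itemize}
\item The density ratio is at most $2$.
\item For $j\le k$, $|\E[h_j(X)]-\E[h_j(G)]|\le\omega^je^{-\omega^2/2}/\sqrt{j!}=e^{-\Omega(Ck)}$. This beats $|\langle p,h_j\rangle|\le\|p\|_2\le 2^{O(k)}$.
\item $\E[T_\sigma\sign(G)\sin(\omega G)]=e^{-\omega^2(1-a^2)/2}\,\E[\sign(G)\sin(a\omega G)]=\Omega(k^{-1/2})$ when $k\sigma^2$ is small.
\end{itemize}
Because the perturbation is odd, it tests $T_\sigma\sign(x)$ itself, which is what your ``make it odd'' remark was reaching for. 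But you would have to supply these estimates; as written, the proposal does not prove the lemma.
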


We start by proving a degree lower bound of
$(\sigma+\eps)^{-2}$.

\begin{lemma} \label{lem:sigma-sq}
Theorem \ref{thm:LTF-degree-lb} holds instead taking $m=(\sigma+\eps)^{-2}$.
\end{lemma}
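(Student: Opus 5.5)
We argue by duality, as in the technical overview. It suffices to exhibit a bounded \emph{certificate} $g$ with $\|g\|_\infty\le 1$, $\E_{x\sim\mathcal{N}}[g(x)p(x)]=0$ for all polynomials of degree at most $k=c(\sigma+\eps)^{-2}$, and $\E[g(x)\,T_\sigma f(x)]>\eps$. Then for any such $p$,
\[
\E[|T_\sigma f-p|]\ge \E[g(T_\sigma f-p)]=\E[g\,T_\sigma f]>\eps .
\]
Equivalently, we need a random variable $X$, or rather a pair of distributions $X_+=X\mid Y=1$ and $X_-=X\mid Y=-1$ averaging to $\mathcal{N}$, each matching $k$ moments with $\mathcal{N}$, for which some threshold function $\sign(x-t)$ smoothed by $T_\sigma$ has noticeably different expectations. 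Since the problem is closed under shifts (halfspaces with thresholds), we may work with $f_t(x)=\sign(x-t)$ and then transfer to $t=0$. The transfer can be done either by noting that the certificate construction is translation-robust, or by choosing $t$ with $|t|=O(1)$ and absorbing the shift into the construction.

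\textbf{Step 1 (construction).} Prove the structural claim (the paper's \Cref{prop:first-mm-ltfs}). Take Gaussians $G_1,\dots,G_{k+1}$ that are jointly Gaussian-looking up to $k$-wise independence. The concrete choice I would try is $G_i$ i.i.d.\ $\mathcal{N}(0,1)$ for $i\le k$, and $G_{k+1}$ defined by the condition that $\sum_i G_i \bmod 1$ lies in a prescribed set. Every $k$ of the variables are then i.i.d.\ Gaussian, so $S=\sum G_i$ has the same first $k$ moments as a sum of independent Gaussians. This holds because each monomial of degree at most $k$ in the $G_i$ involves at most $k$ variables.

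To realize an arbitrary distribution $\mu$ of $S\bmod 1$, let $W=G_1+\dots+G_k$ and set $G_{k+1}=$ a Gaussian conditioned so that $W+G_{k+1}\bmod 1\sim\mu$ given $W$. This requires a coupling in which $G_{k+1}$ is marginally $\mathcal{N}(0,1)$, is independent of any $k-1$ of the others, and jointly has the desired residue. I would make the coupling depend only on $W\bmod 1$. Since $\mathcal{N}(0,1)\bmod 1$ is nearly uniform, with exponentially small Fourier coefficients, and $W \bmod 1$ is nearly uniform even given any $k-1$ of the $G_i$, we can pick $G_{k+1}=U-W$ modulo 1 plus an integer-valued correction, with $U\sim\mu$. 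This gives exact $k$-wise independence up to an error that is then fixed by mixing with a small independent component. This exactness is the main obstacle. Approximate $k$-wise independence only gives approximate moment matching, so I would aim for an exact construction by using that each $G_i \bmod 1$ has a density bounded below, letting us write it as a mixture of the uniform distribution on $[0,1)$ and something else.

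\textbf{Step 2 (scaling and certificate).} Set $X_\pm=S_\pm/\sqrt{k+1}$, where $S_+\bmod 1$ avoids $[0,1/2)$ and $S_-\bmod 1$ avoids $[1/2,1)$, and each of $S_\pm$ marginally mixes to $\mathcal{N}(0,k+1)$ on average. Both $X_\pm$ match $k$ moments with $\mathcal{N}$, and $X_+$ lives on the union of intervals $\bigcup_j[(j+\tfrac12)\delta,(j+1)\delta)$ with $\delta=1/\sqrt{k+1}$. Then $g=\tfrac12(dX_+/d\mathcal{N}-dX_-/d\mathcal{N})$ is bounded by 1 and orthogonal to degree-$k$ polynomials.

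\textbf{Step 3 (the gap).} Consider the periodic part. $T_\sigma\sign(x-t)=2\Phi((x-t)/\sigma)-1$ is a step smoothed at scale $\sigma\ll\delta$. Averaging over shifts $t\in[0,\delta)$, $\E_t\E[g\,T_\sigma f_t]$ is, up to $O(\sigma/\delta)$, equal to $\delta^{-1}$ times the difference between the masses of $X_+$ and $X_-$ on $[0,\delta)$ half-periods near the origin. Summing over $t$ in a window of width $O(1)$ instead shows that some $t=O(1)$ has $\E[g\,T_\sigma f_t]\gtrsim\delta\cdot\phi(t)-O(\sigma)$. Taking $c$ small so that $\delta\gg\sigma+\eps$ gives $\E[gT_\sigma f_t]>\eps$, and the resulting degree bound $k\approx(\sigma+\eps)^{-2}$ follows.
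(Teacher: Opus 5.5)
\textbf{Gap in Step 1.} This is the real content of the lemma (the paper's \Cref{prop:first-mm-ltfs}). The problem comes from committing to only $k+1$ unit-variance summands: no construction of that shape exists for large $k$. Suppose $G_1,\dots,G_{k+1}\sim\mathcal{N}(0,1)$ with any $k$ of them independent. Put $c=e^{-2\pi^2n^2}$ and $\tilde F_j=e^{2\pi i nG_j}-c$, and expand $\E e^{2\pi i n\sum_jG_j}=\E\prod_j(c+\tilde F_j)$. Every term with between $1$ and $k$ factors $\tilde F_j$ vanishes. The full product has absolute value at most $(1+c)\prod_{j\le k}\E|\tilde F_j|\le(1+c)(1-c^2)^{k/2}$. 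So every Fourier coefficient of nonzero frequency of $\sum_jG_j\bmod 1$ tends to $0$ as $k\to\infty$, and $S_+\bmod 1$ cannot avoid $[0,1/2)$.

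The mixture idea at the end of your Step 1 is the right tool, but it does not help with $k+1$ summands. You may condition on the residue only when all of them land in the uniform component; otherwise some $k$ of them see the conditioning. That event has probability $\beta^{k+1}\to0$, where $\beta<1$ is the uniform weight. Off that event $X_+$ and $X_-$ coincide, so your Step 3 correlation is damped by this factor.

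The missing idea is oversampling, which is what the paper does:
\begin{enumerate}
\item Take $Ck$ summands, each written as $\alpha\,U([0,1])+(1-\alpha)E$.
\item By Chernoff, at least $k+1$ of them are uniform-type except with probability $2^{-k}$.
\item Condition only those on the residue of the total. This is invisible to any $k$ summands, because a free uniform summand remains and makes the residue uniform and independent of them.
\item Condition on $[1/2,1]$ for $X_+$, on $[0,1/2]$ for $X_-$, and not at all on the failure event.
\end{enumerate}
This gives $\tfrac12(X_++X_-)=\mathcal{N}$ exactly, the failure event cancels in $g$, and the period becomes $\delta=(Ck)^{-1/2}\asymp k^{-1/2}$.

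\textbf{Step 3 also needs fixing.} As a function of $t$, $\E[g\,\sign(x-t)]=\Pr[X_+>t]-\Pr[X_->t]$ is essentially a triangle wave. It has period $\delta$, amplitude about $\tfrac{\delta}{2}\phi(t)$, and mean $O(\delta^2)$, since averaging $\sign(x-t)$ over a period gives a ramp that is nearly orthogonal to the square wave. So averaging over shifts, whether over $[0,\delta)$ or an $O(1)$ window, proves nothing. The transfer to $t=0$ is also not automatic, since $\mathcal{N}$ is not translation invariant.

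Instead, evaluate directly at $t=0$. Up to $L_1(\mathcal{N})$ error $2^{-k}$, $g$ is the odd square wave $w$ equal to $+1$ on $[(j+\tfrac12)\delta,(j+1)\delta)$ and $-1$ elsewhere. Then $\E_{x\sim\mathcal{N}}[w(x)\sign(x)]=2\int_0^\infty w\phi\approx-\tfrac{\delta}{2}\phi(0)$, because $\phi$ decreases on $[0,\infty)$. Replacing $\sign$ by $T_\sigma\sign$ costs at most $\E|T_\sigma\sign-\sign|=O(\sigma)$. So $-g$ certifies the bound for $\sign(x)$ itself once the constant in $k$ is small.

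Your $g$ equals $dX_+/d\mathcal{N}-1$, which is the certificate implicit in the paper. The paper instead pairs it with $T_\sigma\sign(x)-T_\sigma\sign(x-\delta/2)$, where $X_+$ has almost no mass between the two thresholds. That makes the gap immediate, but it only shows that one of the two smoothed thresholds is hard, which is enough for \Cref{thm:LP-general}.
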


The key technical statement that we require
is in the following proposition.

\begin{proposition} \label{prop:first-mm-ltfs}
For $k > 0$ and any subset $S$ of $[0,1]$, there exists a probability distribution $X$ on $\R$ so that:
\begin{itemize}
\item[(A)] $X$ matches its first $k$ moments with the standard Gaussian.
\item[(B)] $X(x)/\phi(x) \leq (1/|S|)$, where $|S|$ denotes the measure of $S$.
\item[(C)] For some constant $C>0$, all but $2^{-k}$ of the probability
mass of $X$ is supported on points
$x$ so that $\mathrm{FractionalPart}(x (\sqrt{C k})) \in S$.
\end{itemize}
\end{proposition}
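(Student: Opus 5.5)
The plan follows the construction sketched in the technical overview. We build $X$ as a normalized sum $X = (G_1+\cdots+G_N)/\sqrt{N}$ of $N = Ck$ random variables, each marginally standard Gaussian and jointly $k$-wise independent. Because the variables are $k$-wise independent with Gaussian marginals, every monomial of total degree at most $k$ in $(G_1,\ldots,G_N)$ has the same expectation as under full independence. Hence $X$ matches its first $k$ moments with $\mathcal{N}$, which gives (A).

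Moreover $x\sqrt{Ck} = \sqrt{N}\,X = \sum_i G_i$, so condition (C) asks that $\sum_i G_i \bmod 1$ lie in $S$ except with probability $2^{-k}$. For condition (B) we aim to make the joint density of $(G_1,\ldots,G_N)$ at most $(1/|S|)\prod_i \phi(g_i)$ pointwise. Since $X$ is a linear image of this vector, the density bound pushes forward and gives $X(x)/\phi(x)\le 1/|S|$.

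The joint law is constructed as follows.
\begin{enumerate}
\item Let $\mu$ be the law of $N$ i.i.d.\ Gaussians and $s = \sum_i g_i \bmod 1$. Condition $\mu$ on the event $s\in S$, i.e.\ take density $\mu\cdot \mathbf{1}\{s\in S\}/\Pr_\mu[s\in S]$.
\item Under $\mu$, $s$ is the reduction mod $1$ of $\mathcal{N}(0,N)$. Its density is within $e^{-\Omega(N)}$ of uniform on $[0,1]$, by Poisson summation, since the Fourier coefficients decay like $e^{-2\pi^2 N j^2}$. So $\Pr_\mu[s\in S]\approx |S|$ and the conditional density ratio is at most roughly $1/|S|$.
\item This conditioned law is not $k$-wise independent. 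The fix is to show that any $k$ coordinates are nearly independent of $s$. Given $G_{i_1},\ldots,G_{i_k}$, the remaining sum is $\mathcal{N}(\cdot, N-k)$. With $N-k\ge (C-1)k$, that sum mod $1$ is uniform up to $e^{-\Omega(Ck)}$ regardless of the conditioned values. Hence the marginal of any $k$ coordinates under the conditioned law has density $1\pm e^{-\Omega(Ck)}/|S|$ relative to $\phi^{\otimes k}$.
\end{enumerate}

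The main obstacle is converting this approximate $k$-wise independence into exact $k$-wise independence while keeping conditions (B) and (C). I would mix: take the final law to be $(1-\delta)\cdot(\text{conditioned law}) + \delta\cdot\nu$. Here $\nu$ is a correction measure chosen so that all $k$-wise marginals become exactly Gaussian.

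The existence of a nonnegative such $\nu$ with density ratio $O(1/\delta)$ against $\mu$ is a linear feasibility question. I would obtain it in one of two ways, both aimed at the same goal.
\begin{itemize}
\item By LP duality: an obstruction would be a function on $k$-subsets of coordinates that is small against $\mu$ but large against the perturbation, which the $e^{-\Omega(Ck)}$ closeness rules out.
\item More concretely, by subtracting the error $\sum_{|T|\le k}$ of the low-order ANOVA components and checking that the correction is pointwise bounded by $\delta$ times $\mu$.
\end{itemize}

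The resulting parameters are then checked against (B) and (C).
\begin{itemize}
\item Choose $\delta = 2^{-k}$ and $C$ large. The $\delta\nu$ part can place mass off $S$, but only total mass $2^{-k}$, which gives (C).
\item The density ratio becomes at most $(1-\delta)/|S| + e^{-\Omega(Ck)}2^{k}/|S|$. Setting up the mixture as $\mu$ conditioned on $S$ with a slightly lowered weight on $S$ gives a ratio of at most $1/|S|$, possibly after reducing constants, which yields (B).
\item Taking $\nu$ with $\nu\le \mu$-type bounds keeps the total density below $(1/|S|)\mu$ once $|S|\le 1$.
\end{itemize}
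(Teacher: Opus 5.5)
\textbf{Verdict: the approach is different from the paper's and can be made to work, but as written it has a gap.} Your architecture is a legitimate alternative: condition i.i.d.\ Gaussians on $s=\sum_i g_i \bmod 1\in S$, then mix in a correction $\nu$ to restore exact $k$-wise independence. However, the step you flag as the main obstacle is only asserted. The bounds you do write down would not give (B), which is a sharp inequality with no constant to ``reduce.'' Concretely:

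(i) A correction with $d\nu/d\mu=O(1/\delta)$ is useless. Then $\delta\nu$ adds $\Theta(1)$ to the density ratio; for $|S|=1/2$ the total $2(1-\delta)+O(1)$ can exceed $2$. You need $d\nu/d\mu=1\pm e^{-\Omega(Ck)}$.

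(ii) With such a $\nu$, the ratio on $\{s\in S\}$ is $(1-\delta)/\Pr_\mu[s\in S]+\delta(1\pm e^{-\Omega(Ck)})$, not $(1-\delta)/|S|+e^{-\Omega(Ck)}2^k/|S|$. Since $\Pr_\mu[s\in S]$ can be $|S|(1-e^{-\Omega(Ck)})$, this is guaranteed to be at most $1/|S|$ only when $\delta(1-|S|)\gtrsim e^{-\Omega(Ck)}$. So $|S|$ extremely close to $1$ needs a separate case. For example, take $X\sim\mathcal{N}$, which satisfies (C) when $1-|S|\le 2^{-k-1}$. Otherwise $\delta(1-|S|)\ge 2^{-2k-1}$, which suffices for large $C$.

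(iii) Your marginal estimate $1\pm e^{-\Omega(Ck)}/|S|$ degrades for small $|S|$. The right estimate is multiplicative. By Poisson summation, both $\Pr_\mu[s\in S\mid g_T]$ and $\Pr_\mu[s\in S]$ equal $|S|(1\pm e^{-\Omega(Ck)})$. Hence every marginal with $|T|\le k$ has ratio $1\pm e^{-\Omega(Ck)}$, uniformly in $S$.

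The missing computation is the one your ANOVA bullet gestures at. Let $\pi$ be the conditioned law, $\psi=d\pi/d\mu-1$, and $\eta_B=\E_\mu[\psi\mid g_B]$. Then $|\eta_B|\le e^{-\Omega(Ck)}$ for $|B|\le k$, and $\eta_\emptyset=0$. Let $\psi^{=A}=\sum_{B\subseteq A}(-1)^{|A\setminus B|}\eta_B$ be the Hoeffding components, and set $d\nu/d\mu=1-\frac{1-\delta}{\delta}\sum_{|A|\le k}\psi^{=A}$. Then $(1-\delta)\pi+\delta\nu$ has exactly Gaussian $k$-marginals. Moreover $|d\nu/d\mu-1|\le 4^k(k+1)\binom{Ck}{k}e^{-\Omega(Ck)}$, so $\nu\ge 0$. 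This bound is small \emph{only} because $C$ is large enough for $e^{-\Omega(Ck)}$ to beat the $(eC)^k$ count of components. Your LP-duality variant hides exactly this counting.

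The paper sidesteps all of this. It writes $\mathcal{N}=\alpha\,U([0,1])+(1-\alpha)E$ and conditions only the uniform pieces, and only when at least $k+1$ of them occur. Then any $k$ coordinates are \emph{exactly} independent of the total sum mod $1$, because some uniform lies outside them. Given everything else, the conditioning event has probability exactly $|S|$, so (B) holds with no slack issues. Property (C) is just a binomial tail. Your route is a more general approximate-to-exact repair. Here, though, it costs Poisson summation, a correction measure with a careful count, and a case split that the paper's decomposition avoids.
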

\begin{proof}
Let $C$ be a large enough integer.
We will let $X = 1/\sqrt{Ck} \sum_{i=1}^{Ck} X_i$,
where the $X_i$'s are non-independent Gaussians.
In particular, you can write the standard Gaussian $G$ as a mixture of $U([0,1])$
with some other distribution $E$. In particular,
$G = \alpha U([0,1]) + (1-\alpha)E$. We sample from $X$ in the following way:
\begin{enumerate}
\item[(1)] Sample $y_i$ iid from $\{0,1\}$
with $y_i=1$ with probability $\alpha$.
\item[(2)] For each $y_i=0$, set $X_i$ to be an independent sample from $E$.
\item[(3)] If $\sum y_i < k+1$, set each $X_i$ with $y_i = 1$ to be an independent sample from $U([0,1])$.
\item[(4)] Otherwise, set the $X_i$'s with $y_i=1$ to be independent samples from $U([0,1])$
conditioned on the sum of all $X_i$'s mod $1$ lying in $S$.
\item[(5)] Let  $X = 1/\sqrt{Ck} \sum_{i=1}^{Ck} X_i$.
\end{enumerate}
To analyze this, we begin by noting that in (4)
if instead the $y_i$'s are taken to be independent $U([0,1])$ random variables,
then the $X_i$'s are independent Gaussians, so X is a standard Gaussian.

To prove (A), we note that in step (4) the $y_i$'s are instead
each uniformly distributed and $k$-wise independent,
so they have the same first $k$ moments as if they were independent;
and thus $X$ has the same first $k$ moments as a standard Gaussian.
For (B), we note that we obtain $X$ by taking a sampling process
that generates a standard Gaussian and in step (4) conditioning
on an event of probability $|S|$. Finally to prove (C), we note that
if $\sum y_i \geq k+1$, that $\mathrm{FractionalPart}(x/(\sqrt{C k})) \in S$.
Since the sum of the $y_i$'s is distributed is $Bin(\alpha,Ck)$,
this happens with probability at least $1-2^{-k}$,
when $C$ is large enough. This completes the proof of Proposition~\ref{prop:first-mm-ltfs}.
\end{proof}

\begin{remark} 
{\em Conditions (B) and (C) in the statement of Proposition~\ref{prop:first-mm-ltfs} 
can be replaced by the statement that $X$ is $2^{-\Omega(k)}$ close 
in total variation distance to the standard Gaussian 
conditioned on the fact that $\mathrm{FractionalPart}(x/(\sqrt{C k})) \in S$. 
This statement may be useful for other applications.
}
\end{remark}

\begin{proof}[Proof of Lemma~\ref{lem:sigma-sq}]
Given Proposition~\ref{prop:first-mm-ltfs},
let $k$ be a small enough constant multiple of $(\sigma+\eps)^{-2}$
and let $S = [1/2,1]$ and consider the corresponding distribution $X$.
Let $t = 0$ and let $t' = 1/(2 \sqrt{Ck})$ with $C$ as in Proposition \ref{prop:first-mm-ltfs}. 
Let $f(x) = \sgn(x-t)$ and $f'(x) = \sgn(x-t')$.
Let $g = T_{\sigma} f$ and $g' = T_{\sigma} f'$.
We note that $t$ and $t'$ differ by $\Omega(k^{-1/2})$
which is a large constant times $(\sigma+\eps)$.
From this it is easy to see that
$\E_{G \sim \mathcal{N}}[g(G)] - \E_{G \sim \mathcal{N}}[g'(G)]$ is at least a constant times the probability that $G$ lies in $[(2t+t')/3,(t+2t')/3]$, which
is at least a large constant times $(\sigma+\eps)$. 

On the other hand, it is not hard to see that $\E[g(X)] - \E[g'(X)] = O(\sigma)$,
because discounting the region between $t$ and $t'$
(a region where $X$ assigns exponentially small 
mass)
the $L_1$-norm (wrt $G$) of $g-g'$ is $O(\sigma)$.

However, since $X$ is bounded by $2G$,
\new{where $G \sim \mathcal{N}$, (by property (B))}
and since it matches its first $k$ moments with a Gaussian \new{(by property (A))},
if $p$ is a degree at most $k$ polynomial with $\|g-p\|_1 < \delta$,  
then
\begin{eqnarray*}
\E[g(X)] &=& \E[p(X)] + \E[(g-p)(X)] \\
             &=&  \E[p(G)] + O(\delta)  \\
             &=&  \E[g(G)] + \E[(g-p)(G)] + O(\delta) \\
             &=& \E[g(G)] + O(\delta) \;.
\end{eqnarray*}
A similar bound holds for $g'$.

However, if such approximations exist for both $g$ and $g'$, we have that
$$\E_{G \sim \mathcal{N}}[g(G)-g'(G)] = \E[g(X)-g'(X)] + O(\delta) \;,$$
which by the above is a contradiction
unless $\delta$ is at least a large constant times $\sigma+\eps$. 
This shows that there is a $T_{\sigma}$
LTF that is not well approximable by degree $k$ polynomials.
This completes the proof of Lemma~\ref{lem:sigma-sq}.
\end{proof}

\begin{lemma} \label{lem:logeps}
Theorem \ref{thm:LTF-degree-lb} holds instead taking $m=\log(1/\eps)$, 
for $\sigma = \Theta(1)$.
\end{lemma}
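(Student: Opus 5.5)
We need to show that for $\sigma=\Theta(1)$ and $f=\sign$, every polynomial $p$ of degree at most $c\log(1/\eps)$ satisfies $\|T_\sigma f-p\|_1>\eps$. The plan follows the outline in the technical overview: first get an $L_2$ lower bound certified by a single Hermite coefficient, then transfer it to $L_1$ by testing against that Hermite polynomial.

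\textbf{Step 1: Hermite expansion of $T_\sigma f$.} Since $\sign$ is homogeneous, $g(x)=f(\sqrt{1+\sigma^2}x)=\sign(x)$. Hence $T_\sigma f=U_a\sign$ with $a=1/\sqrt{1+\sigma^2}\in(0,1)$ a constant. By the eigenfunction property of the OU operator,
$$T_\sigma f=\sum_j a^j \widehat{\sign}(j)\, h_j .$$
The Hermite coefficients of $\sign$ vanish for even $j$. For odd $j$ they are explicitly computable: $\widehat{\sign}(j)=2\phi(0)He_{j-1}(0)/\sqrt{j!}$, up to sign. A Stirling estimate gives $|\widehat{\sign}(j)|=\Theta(j^{-3/4})$. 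Fix an odd $n$ with $k<n\le k+2$, where $k=\deg p$. Orthogonality of $p$ to $h_n$ then gives
$$\langle T_\sigma f-p,\,h_n\rangle=a^n\widehat{\sign}(n)\ge 2^{-O(k)} .$$
In particular the $L_2$ error is at least $2^{-O(k)}$.

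\textbf{Step 2: pass from $L_2$ to $L_1$.} Write $r=T_\sigma f-p$. I need $\E[r h_n]\lesssim \|r\|_1\cdot 2^{O(k)}$, but $h_n$ is unbounded, so I truncate at a threshold $B$:
$$\E[r h_n]\le \|r\|_1 \sup_{|x|\le B}|h_n(x)| + \E\big[|r h_n|\mathbf 1\{|x|>B\}\big].$$
For the first term I use the classical bound $|He_n(x)|\le \sqrt{n!}\,e^{x^2/4}$, so $\sup_{|x|\le B}|h_n|\le e^{B^2/4}$.

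For the tail term I apply Cauchy--Schwarz together with $\|r\|_4\le 1+\|p\|_4$. Hypercontractivity (Fact~\ref{fact:hc}) bounds $\|p\|_4\le 3^{k/2}\|p\|_2$ and $\|h_n\|_4\le 3^{n/2}$. The tail probability satisfies $\Pr[|x|>B]^{1/2}\le e^{-B^2/4}$.

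This needs an a priori bound on $\|p\|_2$. If $\|r\|_1\le 1$, then $\|p\|_1\le 2$, and Fact~\ref{fact:2norm} gives $\|p\|_2\le 2^{k/2+1}$. Otherwise there is nothing to prove.

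\textbf{Step 3: choose $B$.} With $B^2=C'k$ for a large constant $C'$, the tail term is at most $2^{O(k)}e^{-C'k/4}$. This is below half of the lower bound $2^{-O(k)}$ from Step 1, and it is the step where the constants must be balanced. It yields $\|r\|_1\ge 2^{-O(k)}e^{-C'k/4}=2^{-O(k)}$.

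Choosing $k\le c\log(1/\eps)$ with $c$ small makes this quantity exceed $\eps$. This proves the lemma.

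The main obstacle is the bookkeeping in Step 2. It must be checked that the hypercontractive growth $3^{O(k)}$ is beaten by the Gaussian tail $e^{-\Omega(B^2)}$ while the bulk factor $e^{B^2/4}$ stays $2^{O(k)}$. Both are exponential in $k$, so a suitable constant $C'$ should exist.
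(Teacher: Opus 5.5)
Your proof is correct. Step~1 coincides with the paper's; take absolute values there, since $\widehat{\sign}(n)$ alternates in sign, though your Step~2 already bounds $\E[|r||h_n|]$.

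The two routes differ in the $L_2\to L_1$ transfer:
\begin{itemize}
\item \textbf{The paper.} It uses the Hermite coefficient only to get $\|r\|_2\ge 2^{-O(k)}$. It bounds $\|r\|_4\le 2^{O(k)}$ from $\|p\|_1\le 2$ and hypercontractivity, then applies Paley--Zygmund to $r$ itself (equivalently, H\"older gives $\|r\|_1\ge \|r\|_2^3/\|r\|_4^2$). So it needs nothing about $h_n$ beyond its coefficient.
\item \textbf{Your route.} You keep $h_n$ as a test function and truncate it. This costs an extra classical input, the pointwise Cram\'er/Indritz bound $|h_n(x)|\le e^{x^2/4}$. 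It also needs a tuned cutoff $B^2=\Theta(k)$ whose constant depends on $\sigma$ through $a$. Use $B^2=C'(k+1)$, since $B=0$ yields nothing when $k=0$.
\end{itemize}

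In exchange, your argument exhibits an explicit bounded witness. The $L_1$ error is lower-bounded by the correlation of $r$ with $e^{-B^2/4}h_n\mathbf{1}\{|x|\le B\}$, a function of sup-norm at most $1$, and your tail estimate controls the size of that correlation. This is closer in spirit to the $L_1$/$L_\infty$ duality behind Proposition~\ref{prop:moment-matching}, and to the technical overview's description via concentration of $p$ and $h_n$. The paper's route is shorter and uses only hypercontractivity.
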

\begin{proof}
Note that for $f(x) = \sgn(x)$, 
$T_{\sigma} f = U_a f$,
where $U$ is the Ornstein-Ulenbeck operator and
$a = 1/\sqrt{1+\sigma^2}$, 
which is $\Omega(1)$ if $\sigma = O(1)$.
For $p$ a polynomial of degree less than $k$, 
with $k$ odd, 
\new{and $h_k$ the degree-$k$ normalized probabilist's Hermite polynomial}, 
we have that
\begin{eqnarray*}
\E_{G \sim \mathcal{N}}[h_k(G)(U_a f-p)(G)]
&=& \E_{G \sim \mathcal{N}}[h_k(G) U_a f(G)] \\
&=&  \E_{G \sim \mathcal{N}}[U_a h_k(G) f(G)] \\
&=& a^k \E_{G \sim \mathcal{N}}[h_k(G) f(G)] \\
&=& 2^{\new{-O(k)}} \E_{G \sim \mathcal{N}}[h_k(G) f(G)] \;,
\end{eqnarray*}
\new{where the first equality uses the orthogonality of Hermite polynomials,
the second equality is by self-adjointness of the OU operator,
the third equality uses Mehler's formula
(namely that $U_a h_k(x) = a^k h_k(x)$), and the fourth follows from
the fact that $1> a = \Omega(1)$.}
\new{Since $k$ is odd,
$|\E_{G \sim \mathcal{N}}[h_k(G) f(G)]| = \Omega(1/\poly(k))$, 
and we get that
\begin{equation} \label{eqn:abs-value}
\left| \E_{G \sim \mathcal{N}}[h_k(G)(U_a f-p)(G)] \right | = 2^{\new{-O(k)}} \;.
\end{equation}
}
Suppose that the polynomial $p$ satisfies $\|U_a f - p\|_1 < \eps < 1$.
Then we have that $\|p\|_1 < 2$. By Fact~\ref{fact:2norm},
this implies that $\|p\|_2 < 2^{k} \|p\|_1 \leq 2^{k+1}$.
Thus, by hypercontractivity (Fact~\ref{fact:hc}), 
we get that
$$\|U_a f - p\|_4 \ll \|U_a f \|_4 + \|p\|_4 = 2^{O(k)} \;.$$
On the other hand, since $|\E[h_k(G)(U_a f(G) - p(G))]| > 2^{-O(k)}$, 
this implies that $$\|U_a f - p\|_2 > 2^{-O(k)} \;.$$
Combined with the above, the Paley-Zygmund inequality 
implies that
$|U_a f(x) - p(x)|> 2^{-O(k)}$ with probability at least $1/2^{O(k)}$
\new{over $x \sim \mathcal{N}$},
which implies  $\|U_a f - p\|_1 > 2^{-O(k)}$.

Thus, if  $\|U_a f(x) - p(x)\|_1 < \eps$, we must have $k \gg \log(1/\eps)$, 
completing the proof. 
\end{proof}

\section{Upper Bound for Smoothed Agnostic Learning under the Gaussian Distribution} \label{sec:alg-Gaussian}

In this section, we prove the following simple proposition, ruling out stronger SQ lower bounds for smoothed agnostic learning of any concept class under Gaussian marginals.

\begin{proposition} \label{prop:alg-g}
Let $(X,y)$ be any distribution on $\R^d \times \{\pm 1\}$ so that $X$ is distributed as a standard Gaussian, and let $1/2 > \sigma, \eps > 0$. Then there is an algorithm that learns $y$ to 0-1 error $\opt_\sigma + \eps$ in $d^{O(\log(1/\eps)/\sigma^2)}$ samples and time.
\end{proposition}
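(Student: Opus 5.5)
The plan is to run $L_1$-polynomial regression of degree $m = O(\log(1/\eps)/\sigma^2)$ on the samples, threshold its output, and certify correctness by exhibiting, for every $f\in\mathcal{C}$ (indeed any Boolean $f$), a degree-$m$ polynomial $p$ with $\|T_\sigma f - p\|_{2} \leq \eps$ under $\mathcal{N}_d$. The standard analysis of \cite{KKMS:08} then applies. The minimizer $q$ of the empirical $L_1$ loss $\E|y-q(x)|$ over degree-$m$ polynomials, with a randomly chosen threshold $t\in[-1,1]$, yields $h = \sign(q - t)$ with $\Pr[h\neq y] \leq \E|y-q(x)|/2 + \eps$. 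Meanwhile $\E|y - p(x)| \leq \E|y - T_\sigma f(x)| + \|T_\sigma f - p\|_1$. Since $T_\sigma f\in[-1,1]$ and $y\in\{\pm1\}$, we have $\E|y-T_\sigma f(x)| = 1-\E[yT_\sigma f(x)] = 2\Pr[f(x+\sigma z)\neq y]$. Taking the infimum over $f$ gives error $\opt_\sigma + O(\eps)$. Uniform convergence over degree-$m$ polynomials in $d$ variables needs $d^{O(m)}\poly(1/\eps)$ samples, and the regression is a linear program of that size.

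For the approximation step, I write $T_\sigma f = U_a g$ with $a = 1/\sqrt{1+\sigma^2}$ and $g(x) = f(\sqrt{1+\sigma^2}\,x)$. Indeed, $U_a g(x) = \E_z[f(\sqrt{1+\sigma^2}(ax + \sqrt{1-a^2}z))] = \E_z[f(x+\sigma z)]$, because $\sqrt{1+\sigma^2}\sqrt{1-a^2} = \sigma$. This identity holds in $\R^d$ for arbitrary $f$, not just homogeneous ones, which matters here. Expand $g = \sum_j g^{[j]}$ into its Hermite-degree-$j$ components in $L_2(\mathcal{N}_d)$; this is valid since $\|g\|_2 = 1$. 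By the Hermite eigenfunction property, $U_a g = \sum_j a^j g^{[j]}$. Let $p = \sum_{j\le m} a^j g^{[j]}$. Then
$$\|T_\sigma f - p\|_2^2 = \sum_{j>m} a^{2j}\|g^{[j]}\|_2^2 \le a^{2m}\|g\|_2^2 = (1+\sigma^2)^{-m}.$$
For $\sigma<1/2$ we have $\log(1+\sigma^2) \ge c\sigma^2$, so $m = O(\log(1/\eps)/\sigma^2)$ makes this at most $\eps^2$. Finally, $\|\cdot\|_1\le\|\cdot\|_2$ gives the required $L_1$ bound.

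The only delicate point is the reduction from the 0-1 loss to the $L_1$ loss with a real-valued target $T_\sigma f$ rather than a Boolean one. This is handled by the identity $\E|y - T_\sigma f(x)| = 2\Pr[f(x+\sigma z)\ne y]$ used above, together with the random-threshold rounding. The remaining steps are standard, and I expect no real obstacle beyond bookkeeping of constants.
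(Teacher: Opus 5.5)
Your proposal is correct and follows essentially the same route as the paper: $L_1$-polynomial regression, the identity $T_\sigma f = U_a g$ with $a = 1/\sqrt{1+\sigma^2}$, and truncation of the Hermite expansion of $g$ at degree $m$, which gives $\|T_\sigma f - p\|_2^2 \le a^{2m}$. You also make explicit the reduction from 0-1 error to $L_1$ error via $\E|y - T_\sigma f(x)| = 2\Pr[f(x+\sigma z)\neq y]$ and threshold rounding, which the paper leaves implicit; in addition, you apply the approximation to each concept $f$ rather than to $\E[y\mid X=x]$, and you use the correct rate $a = 1-\Omega(\sigma^2)$ where the paper writes $1-\Omega(1/\sigma^2)$.
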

\begin{proof}
We will simply use the $L_1$-polynomial regression algorithm. 
For this, it suffices to prove that for some $m=O(\log(1/\eps)/\sigma^2)$ 
that there is a degree-$m$ polynomial $p$ 
so that $\| p - T_{\sigma} f \|_1 < \eps/2$ where $f(x) = \E[y|X=x]$. 
In fact, we will show that if $p$ is the degree-$m$ part of 
$T_{\sigma} f$ then $\|p-T_{\sigma} f\|_2 < \eps/2$.

We prove this using Hermite analysis. 
In particular, we note that 
$T_\sigma f = U_a g$ where $a=1/\sqrt{1+\sigma^2} = 1-\Omega(\sigma^2)$ 
and $g(x) = f(x/a)$. If we write $g$ in terms of its Hermite expansion, 
we find that $g(x) = \sum_{k=0}^{\infty} g^{[k]}(x)$, 
where $g^{[k]}$ is the degree-$k$ part of the Hermite expansion. 
We have that
$$
\sum_{k=0}^{\infty} \|g^{[k]}(x) \|_2^2 = \|g\|_2^2 \leq 1.
$$
We also have that
$$
T_{\sigma} f = U_a g =  \sum_{k=0}^\infty a^k g^{[k]}.
$$
Letting $p =  \sum_{k=0}^m a^k g^{[k]}(x)$, we have that 
$$
T_{\sigma} f - p =  \sum_{k=m+1}^{\infty} a^k g^{[k]}.
$$
So
$$
\| T_{\sigma} f - p \|_2^2 =  \sum_{k=m+1}^{\infty} a^{2k} \|g^{[k]}\|_2^2 \leq a^{2m}.
$$
Since $a = 1- \Omega(1/\sigma^2)$, if $m$ is at least a sufficiently large constant 
multiple of $\log(1/\eps)/\sigma^2$ this is less than $\eps/2$. This completes the proof. 
\end{proof}

\section{Conclusions} \label{sec:concl}
In this work, we obtained the first SQ lower bounds for smoothed agnostic learning of halfspaces. 
Specifically, 
our SQ lower bounds nearly match known upper bounds for subgaussian distributions. 
Such lower bounds can be naturally extended to various other families 
of single-index models. 
A number of open questions suggest themselves. 
Since a fully-polynomial time algorithm with optimal error may be unachievable in the smoothed 
setting, it would be of interest to develop $\poly(d, 1/\sigma, 1/\eps)$-time 
smoothed agnostic learners with {\em approximate} error guarantees, e.g., error 
$O(\opt_{\sigma})+\eps$ (or give evidence of hardness even for such a weaker guarantee). 
In a different direction, we note that previous upper bounds 
require that the distribution on examples has subexponential tails (or 
better). Is this inherent? Namely, can we prove strong computational 
lower bounds when the input distribution is heavy-tailed? 

\newpage

\bibliographystyle{alpha}
\bibliography{mybib}

\end{document}